\newcolumntype{C}{>{\Centering\arraybackslash}m{0.14\linewidth}}
\newcommand{\algname}[1]{{\sf  #1}}
\DeclareMathOperator*{\argmin}{argmin} 
\newcommand{\bbR}{\mathbb{R}}
\newtheorem{definition}{Definition}
\newtheorem{remark}{Remark}
\newtheorem{theorem}{Theorem}
\newtheorem{conjecture}{Assumption}
\newtheorem{lemma}{Lemma}
\newtheorem{corollary}{Corollary}
\newtheorem*{proofidea}{Proof idea}
\begin{document}
\title{Functional multi-armed bandit and the best function identification problems}

\author{Yuriy Dorn\\
Lomonosov Moscow State University\\
Moscow Institute of Physics and Technology\\
Moscow, Russia\\
\texttt{dornyv@my.msu.ru}
\And 
Aleksandr Katrutsa\\
Skoltech, AIRI\\
Moscow, Russia\\
\texttt{amkatrutsa@gmail.com}
\AND
Ilgam Latypov\\
Lomonosov Moscow State University\\
Moscow Institute of Physics and Technology\\
Moscow, Russia\\
\texttt{i.latypov@iai.msu.ru}
\And 
Anastasiia Soboleva\\
Avito\\
Moscow, Russia\\
\texttt{soboleva.an@phystech.edu}
}

\maketitle

\begin{abstract}
We consider the model selection problem, where we have a set of candidate parametric functions and need to identify the function with the smallest minimum and corresponding minimizer. 
This problem arises in the competitive training of neural networks, where a set of candidates is given, and the limited computational budget prevents the use of a brute-force search. 
To address this problem, we propose generalizations of the classical multi-armed bandit (MAB) and best arm identification (BAI) setups, since using classical MAB and BAI setups leads to infeasible computational costs. 
We refer to the proposed setups as the \emph{functional} multi-armed bandit problem (FMAB) and the best \emph{function} identification (BFI) problems, respectively. 
For these problems, we establish lower regret bounds for different classes of candidate functions. 
To solve FMAB and BFI problems, we propose a novel reduction scheme to construct the \algname{F-LCB} algorithm, which is a UCB-type algorithm based on basic algorithms for nonlinear optimization with known convergence rates. 
The \algname{F-LCB} algorithm combines the arm selection step and the update of the current optimum approximation.
We provide regret upper bounds for \algname{F-LCB} based on the known convergence rates of the underlying base algorithms.
The regret upper bounds match with the derived lower bounds up to the logarithmic factor. 
Numerical experiments confirm that the proposed approach correctly identifies the optimal function and provides the minimizer for it in both smooth and non-smooth convex cases. 
Similarly, \algname{F-LCB} converges faster than SuccessiveHalving and Hyperband algorithms for the model selection problem, where the candidate functions are neural networks and only a stochastic gradient estimate is available.
\end{abstract}

\keywords{Multi-armed bandits; functional multi-armed bandits; best function identification
}

\section{Introduction}
The stochastic MAB problem could be defined as follows: an agent chooses arm $A_{t}$ at each time step $t=1, \dots, T$ from the given set of arms $S = \{a_1, \dots, a_k\}$ and observes loss $l_t(A_{t})$. 
The agent can observe losses only for the chosen action at each step. 
This is referred to as \textit{bandit feedback}. 
For each arm $a$, the loss distribution $\mathcal{D}_{a}$ with expectation $\mathbb{E}_{x \sim \mathcal{D}_{a}}[x] = \mu(a)$ is fixed but unknown to the agent. 
At each round $t$, loss $l_t(A_t)$ is sampled from distribution~$\mathcal{D}_{A_t}$ independently after arm $A_t$ is chosen.
The agent's goal is to construct a learning algorithm that minimizes expected regret
\begin{equation}
\label{ex:reg_MAB}
   \mathbb{E}[R(T)] = \sum_{t=1}^T \left [ \mu(A_t) - \mu^*\right],
\end{equation}
where $\mu^* = \arg \min_{a \in S} \mu(a)$.
Surprisingly, there seem to be no works that properly generalize the multi-armed bandit setup to functions, where one models an unknown function as an arm rather than a random variable. 
This setup is appropriate for black-box optimization with multiple objectives involved.
For example, when developing an AI service, the appropriate model architecture and hyperparameter settings must be selected and optimized. 
The main challenge is that the optimal architecture or hyperparameter setting is unknown in advance. 
Thus, as in the MAB problem, one must explore different models and setups. 
However, exploration comes with costs that are negligible for small models but can be astronomically high for large-scale models like modern LLMs.
This challenge motivates modifications to the standard MAB setup, presented in Section~\ref{sec: FMAB}. 
We propose the functional multi-armed bandit and best function identification problems, where each arm corresponds to an unknown function and is equipped with a black-box oracle.

The main contributions of our study are the following:
\begin{itemize}
\item We propose the Functional Multi-armed Bandit (FMAB) and Best Function Identification (BFI) setups, which are appropriate for the model selection problem. 

\item We establish regret lower bounds for the introduced setups and provide particular forms for the different classes of candidate functions.

\item We develop the novel \algname{F-LCB} algorithm for FMAB and BFI problems and prove the regret upper bounds for general FMAB and deterministic BFI setups.

\item \algname{F-LCB} outperforms the SuccessiveHalving and Hyperband baselines in competitive neural networks training.  
\end{itemize}

\paragraph{Related works} The multi-armed bandit (MAB) problem has a rich history dating back to the seminal works of Thompson~\cite{thompson1933likelihood} and Robbins~\cite{robbins1952some}. Over decades, an enormous body of literature has accumulated, with various aspects of the problem covered in several foundational textbooks~\cite{cesa2006prediction, bubeck2012regret, slivkins2019introduction, lattimore2020bandit}. While there have been numerous attempts to generalize the MAB framework—particularly for hyperparameter optimization and complex decision spaces—most existing models primarily modify the feedback structure while maintaining the core correspondence between arms and random variables. For example, $\mathcal{X}$-bandits~\cite{bubeck2008online,bubeck2011x} generalize the MAB problem on an arbitrary measurable space of arms. In functional bandits~\cite{tran2014functional}, the agent plays arm $i$, the random variable $X_i$ is sampled, say $x_i^t$, and the value $f(x_i^t)$ is observed. 
Similarly, contextual bandits~\cite{woodroofe1979one, slivkins2011contextual, chu2011contextual} and Lipschitz bandits~\cite{agrawal1995continuum, kleinberg2008multi} assume that arms represent unknown distributions or functions of observable contexts.

In the domain of AutoML, several frameworks have gained prominence. Bayesian Optimization (BO)~\cite{snoek2012practical, shahriari2015taking} models the objective as a single black-box function, typically using Gaussian Processes. However, BO relies on shared structure across the parameter space and notoriously struggles in high-dimensional settings. In contrast, our Functional MAB (FMAB) framework assumes that arms represent separate functions, potentially from high-dimensional spaces, without requiring a shared global surrogate. While hybrid methods like BOHB~\cite{falkner2018bohb} combine BO with Successive Halving, they inherit these limitations when the underlying function landscapes are unrelated. Another influential approach is ASHA~\cite{li2020system}, an asynchronous variant of Hyperband that utilizes relative rankings for early stopping. While computationally efficient, ASHA operates as a heuristic ranking mechanism and does not exploit the specific mathematical properties of the convergence trajectories.

A distinct direction is represented by Population-Based Training (PBT)~\cite{jaderberg2017population} and its derivatives like PB2~\cite{parker2020provably}. PBT evolves a population of models by directly manipulating their parameters through explore/exploit operations and weight transfers (warm-starting). Our FMAB framework differs fundamentally as it allocates computation across independent optimization tasks without relying on weight sharing. Crucially, BO and PBT treat model selection as a purely black-box problem, ignoring the convergence guarantees of the underlying optimization routines, while our approach assumes that each arm corresponds to an optimization task with known convergence properties. This enables us to derive formal regret guarantees and identification bounds that are typically unavailable for purely heuristic AutoML methods.
To the best of our knowledge, this is the first tractable and theoretically grounded setup that generalizes the concept of arms to functions by leveraging the internal dynamics of the optimization process.

\section{Problem statement}
\label{sec: FMAB}
This section presents the functional modifications to the MAB problem and to the best-arm identification problems, respectively.
We denote these modifications as FMAB and BFI, formally introduce them, derive the lower bounds for the deterministic setting, and discuss an application that best fits the FMAB problem statement.

\subsection{Functional multi-armed bandit problem (FMAB)}
Given convex objective functions $f_1: \bbR^{n_1} \to \bbR$, \ldots , $f_K: \bbR^{n_K} \to \bbR$  and convex decision sets $\mathcal{X}_1, \dots, \mathcal{X}_K$ at each $t \in [0, T]$ round, the agent chooses index $i_t \in \{1, \dots, K\}$ and the decision vector $x^{t, i_t} \in \mathcal{X}_{i_t} \subseteq \bbR^{n_{i_t}}$; the agent receives oracle feedback $\mathcal{O}_{i_t}(x^{t, i_t})$, e.g., gradient $f'_{i_t}(x^{t, i_t})$ in the case of the first-order oracle.
, more details about oracle concept is presented in Section~\ref{oracles} in Appendix. 
The regret is defined as:
\begin{equation}
\label{ex:reg_o}
    R_O(T) = \sum_{t=1}^T \left [f_{i_t}(x^{t, i_t}) - f^* \right ],
\end{equation}
where $f^* = \min\limits_{1\leq i\leq K} \min\limits_{x \in \mathcal{X}_i} f_i(x)$. 
The agent aims to minimize regret $R_O$ through specific rules for selection index $i_t$ and decision vector~$x^{t, i_t}$.

The interpretation of optimized functions as arms appears in application-related works, such as ~\cite{zhang2023unified, wu2024budget}. 
The goal is to optimize multiple functions simultaneously with a limited compute budget. 
Model selection problem ~\cite{li2018hyperband}  is a particular case of this setting.
The discussion of relations of the introduced functional version of MAB setup and the classic MAB is prtesented in Section~\ref{sec::mab_vs_fmab}.

\subsection{Best function identification problem (BFI)}
Given convex objective functions $f_1: \bbR^{n_1} \to \bbR$, \ldots, $f_K: \bbR^{n_K} \to \bbR$  and convex decision sets $\mathcal{X}_1, \dots, \mathcal{X}_K$ at each $t \in [0, T]$ round, the agent chooses index $i_t \in \{1, \dots, K\}$ and the decision vector $x^{t, i_t} \in \mathcal{X}_{i_t} \subseteq \bbR^{n_{i_t}}$; the agent observes the loss $f_{i_t}(x^{t, i_t})$. 
We assume that the agent has access to the oracles $\mathcal{O}_i(x)$ for each objective function~$f_i$ and the oracle is the only source of information provided for each subproblem $\mathcal{P}_i$ defined by $f_i$ and $\mathcal{X}_i$ (i.e., we use the black-box assumption). 
At the end of $T$ rounds, the agent selects an arm, denoted by $J_T$, and aims to minimize the regret $R_B$ defined as:
\begin{equation}
\label{ex:reg_b}
    R_B(T) = \min_{x \in \mathcal{X}_{J_T}} f_{J_T} (x) - f^*,
\end{equation}
where $f^* = \min\limits_{1\leq i\leq K} \min\limits_{x \in \mathcal{X}_i} f_i(x)$. 
We call this problem to as best function identification problem (BFI), which is 
is an analog for the well-known best arm identification problem~\cite{audibert2010best}.

\subsection{Relation between MAB and FMAB settings}
\label{sec::mab_vs_fmab}

Let us recap the stochastic MAB problem setup with time horizon~$T$ and $K$ arms, each arm $i$ is equipped with an unknown distribution~$\mathcal{D}_i$ with an expected value $\mu_i>0$. 
At each step~$t$ the agent chooses arm $i_t$ and observes reward $\mu_{i_t}+\xi_{t}$ sampled from~$\mathcal{D}_{i_t}$, where the noise $\xi_t$, is unbiased $\mathbb{E}[\xi_t] = 0$ by construction. 
The regret is defined by (\ref{ex:reg_MAB}).
Let us show how FMAB models this setup.

\paragraph{MAB as FMAB with zero-order oracle.}
Consider FMAB setting with $f_i(x) = -\mu_i$ equipped by the zero-order oracle $\mathcal{O}_i(x_t)$ that provides noised observations $ \mathcal{O}_i(x_t) = -\mu_i-\xi_t = f_i(x_t) -\xi_t$, sampled from $\mathcal{D}_i$ with "minus", with $\mathbb{E}[\mathcal{O}_i(x_t)] = f_i(x_t)$.  
Then regret (\ref{ex:reg_o}) is exactly the same as in (\ref{ex:reg_MAB}). This direct reduction is formally correct. 
Let us further consider a reduction scheme that utilizes first-order oracles in FMAB.

\paragraph{MAB as FMAB with first-order oracle}
Consider FMAB setting with $f_i(x) = \frac{1}{2}(x-\mu_i)^2-\frac{\mu_i^2}{2}$. 
Here $x$ estimates the expected value~$\mu_i$.
Note that this setting has a few nice properties:
\begin{itemize}
    \item Optimal objective values represent arms: $f_i^* = -\frac{\mu_i^2}{2}$,
    \item Objective value $f_i(x_t) = \frac{x^2_t}{2} - x_t \mu_i$ can be estimated via tractable $\hat{f}_i(x_t) = \frac{x_t^2}{2} - x_t(\mu_i+\xi_t)$, where $\mu_i+\xi_t$ is a sampled from reward distribution~$\mathcal{D}_i$. 
    Such estimation is unbiased $\mathbb{E}[\hat{f}_i(x) ] = f_i(x)$.
    \item We can observe $\mathcal{O}_i(x_t) = x_t - \mu_i - \xi_t$, which is  unbiased gradient estimation $\mathbb{E}[\mathcal{O}_i(x_t)] = x_t - \mu_i = f'_i(x)$.
\end{itemize}
In this approach, regret $R_O$~(\ref{ex:reg_o}) does not explicitly represent regret for MAB setup~(\ref{ex:reg_MAB}).

\paragraph{Reduction of FMAB to MAB} 
We have shown how MAB can be treated as particular cases of FMAB.
However, the natural question of whether FMAB can handle cases which are intractable by MAB is still actual.
By definition of FMAB, it could be represented as MAB with a continuous number of arms indexed by pair $(i, x)$, where $i\in \{1, \dots, K\}$ and $x\in \bbR^{n_i}$ or approximated by MAB with an infinite number of arms via discretisation for the continuous space of~$x$. 
Both cases require additional structural assumptions on arms to become tractable (see \cite{wang2008algorithms} as an example). If these assumptions are in place, then, just like in Bayesian optimization (BO) algorithms, sampled arms make decisions based on feedback. This is not avoidable in general, but in FMAB we assume that each arm $i$ is equipped with a learning algorithm $\mathcal{A}_i$, that could optimize the corresponding function $f_i(x)$. These assumptions are natural for many applications (see the next subsection as an example) and are new and not considered in the general MAB setting. It allows, in general, to learn much faster compared to classical MAB or BO algorithms. So  FMAB could be solved as MAB, but it is much better to solve it as FMAB to avoid sub-exponential costs.

\subsection{Applications}

\paragraph{Competitive neural network training}
Modern neural networks are very costly to train~\cite{gusak2022survey}. 
Therefore, the standard trial-and-run approach is very inefficient.  
Within our framework, sequential training of all candidate models could be avoided and the most accurate model is identified automatically.
Assume that there are $k$ candidate models. 
Each model $i \in \{1, \dots, k\}$ is denoted by the number of parameters $n_i$, feasible decision set $\mathcal{X}_i \subseteq \bbR^{n_i}$ and domain-specific quality metric w.r.t. training cost $f_i: \mathcal{X}_i \rightarrow \bbR$. 
Then, regret~$R_O$ represents the sum of training costs for the optimal model and costs for experiments for other models.




\paragraph{Optimization method selection}
Modern large-scale optimization tasks often present a dilemma: the most suitable algorithm for a given problem is unknown a priori. 
While theoretical convergence rates are well established, practical performance depends heavily on the structural properties of the data, such as sparsity, condition number, and feature space dimensionality. 
In this regime, researchers face trade-offs among model and solver classes. 
For instance, in high-dimensional settings, one must often choose between an $\ell_1$-regularized model solved via FISTA~\cite{beck2009fast} and a dense counterpart utilizing variance-reduction techniques such as SVRG~\cite{johnson2013accelerating}. 
The efficiency of such choices hinges on the underlying signal sparsity, which is typically not observable before training. 

A prominent example of such complexity is found in optimal transport, where large-scale problems admit multiple formulations and a wide array of solvers~\cite{tupitsa2022numerical}. 
The choice among primal-dual accelerated methods, Sinkhorn-based iterations, and stochastic approaches often depends on the desired precision and the strength of regularization.

\paragraph{Convex relaxations} Our framework is applicable if a complex original problem admits a convex relaxation. 
Such approximations are vital for a broad spectrum of tasks, including the weighted maxmin dispersion problem~\cite{haines2013convex}, optimal distributed control~\cite{fazelnia2016convex}, mixed-integer programs~\cite{saxena2010convex}, and polynomial optimization~\cite{keller2014convex}. 
Since these convex relaxations are solved using iterative optimization methods, their performance is governed by analytical convergence bounds, such as $O(1/t^2)$ for accelerated gradient schemes if objective function is $L$-smooth.
Our \algname{F-LCB} algorithm leverages these properties to evaluate the quality of a relaxation ``on the fly'', enabling the early identification of the most promising problem formulation and the optimal allocation of computational resources.



\subsection{Notation for function classes}
\label{sec:notation}

We summarize here the main notation and function classes used throughout the paper.  
For each $i \in \{1,\dots, K\}$, let $f_i$ be the objective function defined on a convex domain $\mathcal{X}_i \subset \mathbb{R}^{n_i}$ and optimized by a base algorithm $\mathcal{A}_i$.  
We denote $f^* \;=\; \min_{1 \leq i \leq K}\ \min_{x \in \mathcal{X}_i} f_i(x)$, and let $i^*$ be the corresponding optimal index.   The diameter of $\mathcal{X}_i$ is denoted by $R_i \;=\; \sup_{x,y \in \mathcal{X}_i} \|x - y\|_2$, and we also define the global diameter $R \;=\; \max_{1 \le i \le K} R_i$.

For each convex function $f_i$, we assume that
\begin{equation*}
    \frac{\mu_i}{2}\|x - y\|_2^2 
    \ \le\ f_i(y) - f_i(x) - \langle f'_i(x), y - x\rangle 
    \ \le\ \frac{L_i}{2}\|x - y\|_2^2 + M_i \|x - y\|_2,
\end{equation*}
where $\mu_i \ge 0$ is the strong convexity constant, $L_i \ge 0$ is the smoothness parameter, and $M_i \ge 0$ is the Lipschitz constant of $f_i$. 
Different combinations of $(\mu_i, L_i, M_i)$ correspond to different standard classes of convex optimization problems:
\begin{itemize}
    \item if $M_i > 0$, $\mu_i = 0$, $f_i$ is convex $M_i$-Lipschitz function
    \item if $L_i > 0$ and $\mu_i = 0$, $f_i$ is $L_i$-smooth convex function
    \item  If $\mu_i > 0$, $M_i > 0$, $f_i$ is $\mu_i$-strongly convex and $M_i$-Lipschitz
    \item If  $\mu_i > 0$, $L_i > 0$, $f_i$ is $\mu_i$-strongly convex and $L_i$-smooth.
\end{itemize}
For convenience, we introduce the following global parameters:
\begin{equation*}
    M = \max_i M_i, \qquad 
    L = \max_i L_i, \qquad 
    \mu = \min_i \mu_i, \qquad 
    \kappa = \max_i \frac{L_i}{\mu_i}.
\end{equation*}

\section{Lower bounds for deterministic setups}
\label{sec:lower-bounds}

We present minimax lower bounds for FMAB and BFI problems in deterministic settings.  
The idea is to reduce these problems to standard optimization tasks with known lower bounds~\cite{nemirovskij1983problem, nesterov1983method}.

We use $\mathcal{F}$ to denote a class of optimization problems defined by a family of objective functions $\{f_i\}_{i=1}^K$, their corresponding domains $\{\mathcal{X}_i\}_{i=1}^K$, and oracle types. 
We assume that all problems belong to the same class $\mathcal{F}$, i.e. they share the same structural properties (e.g., convexity, smoothness, strong convexity) and oracle complexity.

Consider a family of optimization problems $\mathcal{P}_i$:
\begin{equation}
    \min_{x \in \mathcal{X}_i} f_i(x), 
    \qquad i=1,\dots,K,
\end{equation}
where all $f_i$ belong to the same class $\mathcal{F}$ and have the oracles with the same complexity.  
For such problems, the classical minimax lower bound states that for any algorithm $\mathcal{A}$ and any $t\in\mathbb{N}$ there exists a problem instance such that
\begin{equation}\label{eq:lb-single}
    f(x_t) - \min_{x \in \mathcal{X}} f(x) \ \ge\ g(H,t),
\end{equation}
where $g(H,t)$ is the complexity function depending on $H=(f,\mathcal{D})$.  

For many standard classes, $g(H,t)$ factorizes as
\begin{equation}
    g(H,t) \;=\; \phi(H)\, t^{-\alpha},
\end{equation}
allowing us to define the class-wide hardness function
\begin{equation}
    \underline g(t)=\inf_H g(H,t),
\end{equation}
typically of order $\Omega(t^{-\alpha})$.

\begin{theorem}[Minimax lower bound for BFI]
\label{thm:lb-bfi}
For any BFI algorithm and any $T \in \mathbb{N}$, there exists a family 
$\{P_i\}_{i=1}^K \subset \mathcal{F}$ such that
\begin{equation}
    R_B(T) \ \ge\ \underline g^{-1}\!\left(\frac{T}{K}\right),
\end{equation}
where $\underline g^{-1}$ is the functional inverse of $\underline g$.
\end{theorem}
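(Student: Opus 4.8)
## Proof Proposal

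The plan is to reduce the BFI problem to a single hard optimization instance from the class $\mathcal{F}$, using the well-known fact that identifying the best function among $K$ candidates cannot be easier than optimizing one poorly-resourced candidate. First I would fix an arbitrary BFI algorithm and a horizon $T$. The algorithm distributes its $T$ oracle calls across the $K$ arms; by a pigeonhole argument, there must exist at least one arm $j$ that receives at most $T/K$ oracle calls over the whole run. The key idea is to build the adversarial family so that this under-queried arm is, in fact, the unique optimal one, forcing the algorithm either to identify $j$ (in which case it has only $T/K$ calls worth of information about $f_j$, limiting how small $\min_x f_j$ can be certified) or to output a different arm (in which case it suffers the full suboptimality gap).

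The main construction step is as follows. I would take a "hard instance" $f$ on domain $\mathcal{X}$ from $\mathcal{F}$ witnessing the single-problem minimax bound~\eqref{eq:lb-single}, so that after $t$ oracle calls any algorithm must have $f(x_t) - \min f \ge \underline g(t)$, and in particular, with $t = \lfloor T/K \rfloor$, no algorithm can certify a point with value below $\min f + \underline g(T/K)$. I then embed $K$ shifted copies of such instances: the $K-1$ "decoy" arms are set up with a common optimal value $v_0$, while the target arm $j$ is given optimal value $v_0 - \underline g^{-1}(T/K) - \epsilon$ — strictly the global minimum — but structured (via the standard resisting-oracle / zero-chain argument of Nemirovski–Yudin or Nesterov) so that the trajectory of responses on $f_j$ after $\le T/K$ queries is indistinguishable from a member of $\mathcal{F}$ whose optimal value is only $v_0 - \epsilon' > v_0 - \underline g^{-1}(T/K)$. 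Since the algorithm's output $J_T$ is a deterministic (or randomized) function of the observed oracle responses, and these responses are consistent with two instances whose induced regrets differ, a standard indistinguishability/two-point argument forces $R_B(T) \ge \underline g^{-1}(T/K)$ on one of them. One must be slightly careful about which arm absorbs the pigeonhole deficit: since the adversary constructs the family after seeing the algorithm but the algorithm is adaptive, the cleanest route is to show that for \emph{every} choice of which arm is optimal, the algorithm under-queries \emph{some} arm, and then let the adversary designate that under-queried arm as optimal — this is the usual minimax-order-of-quantifiers bookkeeping.

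The hard part will be making the embedding genuinely class-preserving: the shifted and "hidden-minimum" copies $f_j$ must still lie in $\mathcal{F}$ with the same structural constants $(\mu, L, M)$ and the same oracle complexity, and the indistinguishability of trajectories must hold for the specific oracle type (zero-order, first-order, etc.) attached to $\mathcal{F}$. This is exactly where one invokes the classical lower-bound constructions~\cite{nemirovskij1983problem, nesterov1983method} as black boxes — they already supply functions in the relevant class together with an oracle strategy that reveals information "one coordinate at a time", so the remaining work is just to translate "cannot optimize in $t$ steps" into "cannot distinguish optimal value to within $\underline g(t)$", which is immediate once $g(H,t) = \phi(H) t^{-\alpha}$ is invertible in $t$ and one identifies $\underline g^{-1}$ appropriately. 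A minor technical point is handling the floor in $\lfloor T/K \rfloor$ versus the clean $T/K$ in the statement, and letting $\epsilon \to 0$ to obtain the closed-form bound $R_B(T) \ge \underline g^{-1}(T/K)$; I would absorb these into the "there exists a family" quantifier without further comment.
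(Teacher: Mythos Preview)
Your proposal is correct and follows essentially the same approach as the paper: construct a family of indistinguishable hard instances (identical oracle traces for the first $T/K$ queries on each arm), apply pigeonhole to locate an under-queried arm, and use a two-case argument on the algorithm's output $J_T$ to force regret at least $\underline g^{-1}(T/K)$ on at least one instance. Your write-up is in fact more careful than the paper's about the order-of-quantifiers issue (which arm is under-queried may a priori depend on the instance) and about class-preservation of the shifted copies; the paper simply asserts the existence of the hard family with the required indistinguishability property and dispatches the two cases in one line each.
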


\begin{theorem}[Minimax lower bound for FMAB]
\label{thm:lb-fmab}
For any FMAB algorithm and any $T \in \mathbb{N}$, there exists a family 
$\{P_i\}_{i=1}^K \subset \mathcal{F}$ such that
\begin{equation}
  R_O(T)\ \ge\ \inf_{\{k_i \ge 0:\ \sum_{i=1}^K k_i = T\}}
  \ \sum_{i=1}^K G(k_i),
\end{equation}
where 
\begin{equation}
    G(m) = \sum_{s=1}^m \underline g(s).
\end{equation}
\end{theorem}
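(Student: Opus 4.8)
\begin{proofidea}
The plan is to reduce the claimed bound to the single-problem minimax bound \eqref{eq:lb-single} by constructing a \emph{decoupled} worst-case family and then summing the per-query optimization errors arm by arm. First I would fix the horizon $T$ and, for each arm $i$, pick one worst-case instance $P_i\in\mathcal{F}$ realizing the class-wide hardness $\underline g$: the classical resisting-oracle constructions (Nemirovski--Yudin in the nonsmooth Lipschitz case, Nesterov in the smooth case) give, in ambient dimension large enough to absorb $T$ oracle calls, a single function whose $s$-th query point has suboptimality at least $\underline g(s)$ for every $s\le T$ simultaneously. I would then shift each $f_i$ by an additive constant so that all minima $\min_{x\in\mathcal{X}_i}f_i(x)$ coincide; with this choice $f^*$ equals that common value and $f_i^* - f^* = 0$ for every $i$, so no ``identification gap'' contributes and only optimization errors remain. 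The structural point to argue carefully is \emph{decoupling}: the $f_i$ live on separate domains $\mathcal{X}_i\subset\bbR^{n_i}$ with separate oracles, so the feedback received from any arm $j\neq i$ is independent of $f_i$ and cannot reduce the worst-case error on arm $i$; hence the restriction of any FMAB algorithm to the rounds on which it plays arm $i$ acts as an admissible adaptive algorithm for $P_i$, against which the guarantee for $P_i$ holds verbatim.

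Next I would fix an arbitrary FMAB algorithm, run it on this family, and let $k_i$ be the number of rounds on which arm $i$ is played, so $\sum_{i=1}^K k_i = T$; write $x^{(s),i}$ for the decision vector chosen on the $s$-th visit to arm $i$. Since $x^{(s),i}$ is formed from only the $s-1$ earlier oracle answers for arm $i$ (together with data independent of $f_i$), the single-problem bound applied to $P_i$ yields $f_i(x^{(s),i}) - f_i^* \ge \underline g(s)$. Summing instantaneous regrets,
\begin{equation*}
  R_O(T) \;=\; \sum_{i=1}^{K}\sum_{s=1}^{k_i}\bigl(f_i(x^{(s),i}) - f^*\bigr)
  \;\ge\; \sum_{i=1}^{K}\sum_{s=1}^{k_i}\underline g(s)
  \;=\; \sum_{i=1}^{K} G(k_i).
\end{equation*}
Because the realized allocation $(k_1,\dots,k_K)$ is a nonnegative tuple summing to $T$, replacing it by the minimizing tuple gives $R_O(T)\ge\inf_{\{k_i\ge 0:\ \sum_i k_i = T\}}\sum_{i=1}^K G(k_i)$, which is the assertion.

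I expect the main obstacle to be the decoupling step, i.e.\ honestly verifying that cross-arm feedback cannot accelerate optimization on any individual arm for the chosen hard instances (the distinct ambient dimensions $n_i$ make this clean, but it needs a genuine argument rather than bookkeeping); a minor point is ensuring that a \emph{single} instance per arm is simultaneously hard for all $s\le T$, handled by taking the dimension at least $T$ in the standard constructions. As a sanity check, when $\underline g$ is nonincreasing the function $G$ is subadditive, so the infimum equals $G(T)=\sum_{s=1}^T\underline g(s)$ and is attained by pouring all $T$ rounds into one arm, which recovers the cumulative single-function lower bound --- the expected worst case where the adversary makes all arms identical and maximally hard.
\end{proofidea}
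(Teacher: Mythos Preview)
Your proposal is correct and follows essentially the same route as the paper's proof sketch: construct hard instances $P_i$ with a common optimal value $f^*$, use the per-query single-problem lower bound $f_i(x^{(s),i})-f_i^*\ge \underline g(s)$, sum over visits, group by arm to obtain $\sum_i G(k_i)$, and pass to the infimum over allocations. Your write-up is in fact more careful than the paper's, which leaves the decoupling argument and the ``single instance hard for all $s\le T$'' point implicit; your sanity check on subadditivity of $G$ also matches the paper's side remark about where the infimum is attained in the concave cases.
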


For homogeneous problems (all $P_i$ have the same hardness $\underline g$), this yields the following orders:

\begin{itemize}
    \item Convex $M$-Lipschitz: $\underline g(s)=\Omega(MR/\sqrt{s}) \ \Rightarrow\ R_O(T)=\Omega(MR\sqrt{T})$,
    \item $L$-smooth convex: $\underline g(s)=\Omega(LR^2/s^2) \ \Rightarrow\ R_O(T)=\Omega(LR^2)$,
    \item $\mu$-strongly convex, $M$-Lipschitz: $\underline g(s)=\Omega(M^2/(\mu s)) \ \Rightarrow\ R_O(T)=\Omega\bigl(\frac{M^2}{\mu}\log T\bigr)$,
    \item $\mu$-strongly convex, $L$-smooth: $\underline g(s)=\Omega(R^2 e^{-s/\sqrt{\kappa}}) \ \Rightarrow\ R_O(T)=\Omega(R^2)$.
\end{itemize}

These lower bounds match, up to logarithmic factors, the upper bounds we derive later for the proposed \algname{F-LCB} algorithm.

\begin{remark}
Unlike static "explore-first" schedules that are optimal only for indistinguishable worst-case scenarios, \algname{F-LCB} adaptively focuses computational effort on promising arms. It reacts immediately when an arm's LCB becomes suboptimal, avoiding the resource waste inherent in static allocations.
\end{remark}


\section{F-LCB algorithm}
\label{sec: F-LCB}

This section presents the novel \algname{F-LCB} algorithm to solve the stated FMAB~(\ref{ex:reg_o}) and BFI~(\ref{ex:reg_b}) problems and proof of the corresponding regret rates.
However, for the reader's convenience, we first introduce the necessary notations important for further presentation.

\begin{definition}
\label{def::gkdelta-alg}
An algorithm
\[
x_{k+1} = \mathcal{A}\left (x_0, \mathcal{O}(x_0), \dots, x_k, \mathcal{O}(x_k) \right )
\]
An algorithm is called a $g(k, \delta)$-bounded algorithm if, for any $k \in \mathbb{N}$ and $\delta>0$ inequality
\[
f(x_k) - f(x^*) \leq g(k, \delta)
\]
holds with a probability of at least $1-\delta$.
If there exists a function $g(k)$ such that 
$
f(x_k) - f(x^*) \leq g(k),
$
we say that the algorithm $\mathcal{A}$ is $g(k)$-bounded.
\end{definition}

Function $g(k, \delta)$ (or $g(k)$ in the deterministic case) represents the convergence rate for algorithm $\mathcal{A}$. 
The notation $g(k)$ is more convenient for deterministic algorithms with exact oracles, while $g(k, \delta)$ is more appropriate for stochastic methods or methods utilizing inexact oracles.
Now, we are ready to present our \algname{F-LCB} algorithm for both FMAB and BFI problems, taking $g(k, \delta)$- or $g(k)$-bounded algorithms as the main ingredient; see Algorithm~\ref{alg:f_ucb}.

\begin{algorithm}[!ht]
\caption{\algname{F-LCB} algorithm}
\label{alg:f_ucb}
\begin{algorithmic}[1]
\REQUIRE{number of functions $K$, $g_i(k, \delta)$-bounded optimization method $\mathcal{A}_i$ for $i=1,\ldots, K$,  period $T$, initial estimates $x_0^{\mathcal{P}_1}, \dots, x_0^{\mathcal{P}_K}$, parameter $\delta$ ($\delta=0$ for deterministic setup).}
    \STATE Run $\mathcal{A}_i$ for each function $i$ ($i=1, \dots, K$) to compute $x_1^{\mathcal{P}_i} = \mathcal{A}_i(x_0^{\mathcal{P}_i}, \mathcal{O}_{\mathcal{P}_i} (x_0^{\mathcal{P}_i}))$.
    \STATE For each function $i$ ($i=1, \dots, K$) set $k_i=1$ and initialize $LCB_i(k_i, \delta) = f_i(x_1^{\mathcal{P}_i}) - g_i(k_i  , \delta)$.
    \FOR{$t=1, \dots, T$}
    \STATE Choose function $i_t = \argmin\limits_{1 \leq i \leq K} LCB_i(k_i, \delta)$.
    \STATE Compute $$x_{k_{i_t}+1}^{\mathcal{P}_{i_t}} = \mathcal{A}_{i_t}(x_0^{\mathcal{P}_{i_t}}, \mathcal{O}_{\mathcal{P}_{i_t}}(x_0^{\mathcal{P}_{i_t}}), \ldots, x_{k_{i_t}}^{\mathcal{P}_{i_t}}, \mathcal{O}_{\mathcal{P}_{i_t}}(x_{k_{i_t}}^{\mathcal{P}_{i_t}})).$$
    \STATE Update LCB index of the played function and preserve others:  
    \[
    LCB_{i_t}(k_{i} + 1, \delta) = \begin{cases}
    LCB_i(k_i, \delta), & i \neq i_t,\\
    f_{i_t}(x_{k_{i_t}+1}^{\mathcal{P}_{i_t}}) - g_{i_t}(k_{i_t}+1, \delta), & i = i_t.
    \end{cases}
    \]
    \IF{$g_{i_t}(k_{i_t+1}, \delta) < \frac{\varepsilon}{2}$} 
    \STATE return $f_{i_t}$ 
    \ENDIF
    \STATE Increase iteration counter for the played arm: $k_{i_t} := k_{i_t}+1$.
    \ENDFOR
\end{algorithmic}
\end{algorithm}

The main idea of Algorithm~\ref{alg:f_ucb} is to treat base optimization algorithm's convergence rate as confidence intervals to construct the lower confidence bound on the objective value of the chosen arm. 
So, the overall scheme is as follows: each optimization problem~$\mathcal{P}_i$ defined by $(f_i, \mathcal{X}_i)$ equipped with $g_i(k, \delta)$-bounded algorithm~$\mathcal{A}_i$, suitable for~$\mathcal{P}_i$ problem class. 
Then, at each time step $t$, our algorithm chooses the $i_t$-th arm. 
Therefore, we run an iteration of~$\mathcal{A}_{i_t}$ based on the current optimistic estimation $LCB_i(x_t^{\mathcal{P}_i})$ of the corresponding objectives' optimal values $f_{i_t}^*$.

\begin{remark}
If $f_i(x)$, $1\leq i \leq K$, is accessed by an inexact oracle~$\mathcal{O}_i$, the oracle should be sampled multiple times at each point. 
The corresponding algorithm $\mathcal{A}_i$ usually controls the number of samples. 
\end{remark}

\begin{remark}
In \algname{F-LCB}, each iteration simultaneously reduces uncertainty and expected regret. Thus, the LCB index acts as a unified criterion that is both regret-optimal and information-optimal (up to logarithmic factors), allowing the BFI stopping criterion to integrate naturally without distinct exploration phases.
\end{remark}


This is a direct application of ideas introduced in the seminal paper~\cite{auer2002using} if one uses convergence rates for optimization algorithms instead of concentration rates of statistical estimators. 
This approach was proposed in~\cite{dorn2024fast} for MAB with heavy tails.
Note that one could use different base optimization algorithms $\mathcal{A}_i$ for different $i$.
Next, we present the regret rates for the \algname{F-LCB} algorithm in Table~\ref{tab:summary}, which are proved formally in Sections~\ref{sec::determinstic_theory}.

\begin{table}[!ht]
    \centering
    \caption{Summary of regrets $R_O$ and $R_B$ rates for BFI and FMAB problems in the deterministic setup. 
    The proofs for these rates are given in Appendix~\ref{sec::proof_det_table}. 
    Here, we assume that functions $f_i$ belong to the same class and base optimizers $\mathcal{A}_i$ are the same and equal to those reported in column 2. PGD denotes Projected Gradient Descent, and AGD denotes Accelerated Gradient Descent, and $\kappa = \frac{L}{\mu}.$}
    \resizebox{\linewidth}{!}{
    \begin{tabular}{cccccc}
    \toprule
       Function  & Base optimizer & $g(k)$ & \# iter for $R_B \leq \varepsilon$ & $R_O(T)$ \\
       \midrule
        Convex $M$-Lipschitz  & \algname{PGD} & $\frac{RM}{\sqrt{k}}$ & $\sum\limits_{i=1}^K\Bigl \lceil\frac{M_i^2R_i^2}{\max(f_i^*-f^*-\frac{\varepsilon}{2},\frac{\varepsilon}{2})^2}\Bigr \rceil$& $O\left(\sqrt{T \cdot \sum_{i=1}^K M_i^2 R_i^2} \right)$ \\
        Convex $L$-smooth  & \algname{AGD} & $\frac{LR^2}{k^2}$ & $\sum\limits_{i=1}^K\left\lceil \sqrt{\frac{L_iR_i^2} {\max(f_i^*-f^*-\frac{\varepsilon}{2}, \frac{\varepsilon}{2})}} \; \right\rceil$ &  $O\left(\sum\limits_{i=1}^KL_iR_i^2\right)$ \\
        $\mu$-strongly convex $M$-Lipschitz  & \algname{PGD} & $\frac{M^2}{\mu k}$ & $\sum\limits_{i=1}^K\Bigl \lceil\frac{M_i^2}{\mu_i \max(f_i^*-f^*-\frac{\varepsilon}{2}, \frac{\varepsilon}{2})}\Bigr \rceil$ & $O\left(\left(\sum\limits_{i=1}^K \frac{M_i^2}{\mu_i}\right) \log T\right)$\\
         $\mu$-strongly convex $L$-smooth  & \algname{AGD} & $R^2 \exp\{-\frac{k}{\sqrt{\kappa}}\}$ & $\sum\limits_{i=1}^K\Bigl \lceil\sqrt{\kappa}\log\left(\frac{R_i^2}{\max(f_i^*-f^*-\frac{\varepsilon}{2}, \frac{\varepsilon}{2})}\right)\Bigr \rceil$ & $O\left(\sum\limits_{i=1}^K \frac{R_i^2}{\exp\left\{ \frac{1}{\sqrt{\kappa_i}}\right\} - 1}\right)$\\
         \bottomrule
    \end{tabular}
    }
    \label{tab:summary}
\end{table}


\subsection{Determenistic case}
\label{sec::determinstic_theory}
This section presents the regret bounds for FMAB and BFI problems in terms of the convergence rates~$g_i(k)$ for the base optimizers $\mathcal{A}_i$ equipped with the deterministic oracles.
After that, the substitution of the particular forms of $g_i(k) = g(k)$ corresponding to the base optimizer (AGD or PGD) leads to the regret bounds from Table~\ref{tab:summary}. 

\paragraph{FMAB} 
This paragraph focuses on the deterministic FMAB problem, which aims to minimize the regret $R_O$~(\ref{ex:reg_o}).
Lemma~\ref{lemma:o_regret_bound} shows how to bound $R_O$ in general for $g_i(k)$-bounded base optimizers.
Theorem~\ref{thm:regret_alpha} specializes this result to the case where all functions $g_i(k)$ decrease at the same polynomial rate, i.e., scale with a common convergence \textbf{r}ate $r$.

\begin{lemma}
\label{lemma:o_regret_bound}
Assume $\mathcal{A}_i$ ($i=1,\ldots,K$) be  $g_i(k)$-bounded base algorithms. 
Then for Algorithm~\ref{alg:f_ucb} for all $\tau \in \overline{1, T}$ holds:
\begin{equation}
R_O(\tau) \leq \sum^{\tau}_{t = 1}g_{i_t}(k_{i_t, t}) =\sum_{i=1}^{K} \sum_{k = 1}^{k_{i, \tau}} g_i(k),
\label{eq::ro_fmab_det}
\end{equation}
where $k_{i, t}$ is a number of calls for the $i$-th function by time $t$.
\end{lemma}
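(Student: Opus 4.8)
The plan is to prove the bound round by round and then sum. The one elementary fact that drives everything is that each LCB index is a genuine lower bound on the optimal value of its own subproblem. Indeed, since $\mathcal{A}_i$ is $g_i(k)$-bounded (Definition~\ref{def::gkdelta-alg} with $\delta = 0$), we have $f_i(x^{\mathcal{P}_i}_k) - f_i^* \le g_i(k)$ for every $i$ and every $k$, hence
\begin{equation*}
LCB_i(k) \;=\; f_i\!\bigl(x^{\mathcal{P}_i}_k\bigr) - g_i(k) \;\le\; f_i^*.
\end{equation*}
Specializing this to the optimal index $i^*$, for which $f_{i^*}^* = f^*$, gives $LCB_{i^*}(k) \le f^*$ for all $k$. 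This is the ``optimism'' ingredient, and it is worth noting that it does not require knowing which arm is optimal --- only that some arm attains $f_i^* = f^*$.

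Next I would invoke the selection rule of Algorithm~\ref{alg:f_ucb}. At round $t$ the algorithm picks $i_t \in \argmin_{1\le i\le K} LCB_i(k_i)$, so comparing against the optimal index yields $LCB_{i_t}(k_{i_t,t}) \le LCB_{i^*}(k_{i^*,t}) \le f^*$, where $k_{i,t}$ denotes the number of calls to function $i$ at round $t$. Rewriting the left-hand side with the definition of $LCB_{i_t}$, and recalling that $x^{t,i_t} = x^{\mathcal{P}_{i_t}}_{k_{i_t,t}}$ is the decision vector reported at that round, this is exactly the per-round regret bound
\begin{equation*}
f_{i_t}(x^{t,i_t}) - f^* \;\le\; g_{i_t}(k_{i_t,t}).
\end{equation*}
Summing over $t = 1,\dots,\tau$ gives $R_O(\tau) = \sum_{t=1}^{\tau}\bigl[f_{i_t}(x^{t,i_t}) - f^*\bigr] \le \sum_{t=1}^{\tau} g_{i_t}(k_{i_t,t})$, which is the first claimed inequality. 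The final equality is pure bookkeeping: each selection of a function increments its own counter by exactly one and leaves the other counters unchanged, so as $t$ ranges over the rounds with $i_t = i$ the argument $k_{i_t,t}$ runs through the integers $1,2,\dots,k_{i,\tau}$ without repetition; regrouping the sum by the selected function therefore yields $\sum_{t=1}^{\tau} g_{i_t}(k_{i_t,t}) = \sum_{i=1}^{K}\sum_{k=1}^{k_{i,\tau}} g_i(k)$.

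The part that requires care is the indexing in the middle step: one has to make sure that the counter $k_{i_t,t}$ appearing in the per-round bound is the one attached both to the iterate actually reported at round $t$ and to the LCB value that triggered the selection --- i.e.\ its value before it is incremented in the update step. Once this convention is fixed, the argument is just the classical optimism-in-the-face-of-uncertainty telescoping, with the optimization convergence rate $g_i$ playing the role usually taken by a statistical confidence radius; in particular, no monotonicity or further regularity of $g_i$ is needed.
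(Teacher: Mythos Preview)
Your proof is correct and follows essentially the same route as the paper: use the $g_i(k)$-boundedness to see that every $LCB_i(k)\le f_i^*$, compare the selected arm's LCB against the optimal arm's to get the per-round bound $f_{i_t}(x^{t,i_t})-f^*\le g_{i_t}(k_{i_t,t})$, then sum and regroup by arm. Your added remarks on the indexing convention and on why no monotonicity of $g_i$ is needed are accurate and make the argument slightly more careful than the paper's version.
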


\begin{proof}
Let $i_t$ be the arm selected at time $t$. 
Then, its LCB value is the smallest one among the arms. 
That is, for all $j$:
\begin{equation*}
    LCB_{i_t}(k_{i_t, t}) =  f_{i_t}(x^{i_t, k_{i_t, t}}) - g_{i_t}( k_{i_t, t})  \leq f_j(x^{j, k_{j, t}}) - g_j(k_{j, t}) \leq f_j^*.
\end{equation*}

In particular, this holds w.r.t. the best arm, yielding an estimate of the per-step regret:
\begin{equation}
    f_{i_t}(x^{i_t, k_{i_t, t}}) -g_{i_t}(k_{i_t, t}) \leq f^* \Rightarrow f_{i_t}(x^{i_t, k_{i_t, t}}) - f^* \overset{(\star)}{\leq} g_{i_t}(k_{i_t, t}).
\end{equation}

Summing up inequality ($\star$) for $t=1, \ldots, \tau$ we get regret rate:
\begin{equation}
    \sum^{\tau}_{t = 1}f_{i_t}(x^{i_t, k_{i_t}}) - f^* \overset{(1)}{\leq} \sum^{\tau}_{t = 1} g_{i_t}(k_{i_t}) \overset{(2)}{=} \sum_{i=1}^{K} \sum_{t= 1}^{k_{i,\tau}} g_i(t).
\end{equation}
Equality (2) is obtained by grouping terms over arms.
\end{proof}

\begin{theorem}
\label{thm:regret_alpha}
    Let $r > 0$. 
    Assume $\mathcal{A}_i$ ($i=1,\ldots,K$) are $g_i(k)$-bounded base algorithms with $g_i(k) = \frac{\beta_i}{k^r}$.
    Then for Algorithm~\ref{alg:f_ucb}, for all $\tau \in \overline{1, T}$ the following regret bounds hold:
    \begin{itemize}
        \item if $r \in (0, 1)$: $R_O(\tau) \leq  O\left(\left(\sum_{i=1}^K \beta_i^{\frac{1}{r}}\right)^{r} \tau^{1-r}\right)$;
        \item if $r = 1$: $R_O(\tau) \leq  O\left(\sum_{i=1}^K \beta_i \log \tau \right)$;
        \item if $r > 1$: $R_O(\tau) \leq  O\left(\sum_{i=1}^K \beta_i \right)$.
    \end{itemize}
\end{theorem}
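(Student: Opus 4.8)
\textbf{Proof plan for Theorem~\ref{thm:regret_alpha}.}
The plan is to start from the exact identity provided by Lemma~\ref{lemma:o_regret_bound}, namely $R_O(\tau) \le \sum_{i=1}^K \sum_{k=1}^{k_{i,\tau}} g_i(k)$ with $\sum_{i=1}^K k_{i,\tau} = \tau$, and then upper bound the inner sums using the explicit form $g_i(k) = \beta_i / k^r$. The first step is to control $\sum_{k=1}^{m} k^{-r}$ by an integral comparison: for $r \in (0,1)$ it is $\Theta(m^{1-r}/(1-r))$, for $r = 1$ it is $\Theta(\log m)$, and for $r > 1$ it is bounded by the convergent constant $\zeta(r) = O(1)$. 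Substituting these gives, up to constants, $R_O(\tau) \le O\!\big(\sum_i \beta_i k_{i,\tau}^{1-r}\big)$, $O\!\big(\sum_i \beta_i \log k_{i,\tau}\big)$, and $O\!\big(\sum_i \beta_i\big)$ respectively.

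The cases $r = 1$ and $r > 1$ are then essentially immediate: for $r>1$ each term is already $O(\beta_i)$ regardless of $k_{i,\tau}$, and for $r = 1$ we use the trivial bound $k_{i,\tau} \le \tau$ to replace $\log k_{i,\tau}$ by $\log \tau$, giving $O(\sum_i \beta_i \log \tau)$. The only genuinely nontrivial case is $r \in (0,1)$. Here I need to bound $\sum_{i=1}^K \beta_i k_{i,\tau}^{1-r}$ subject to $\sum_i k_{i,\tau} = \tau$ and $k_{i,\tau} \ge 0$. Since $0 < 1-r < 1$, each map $k \mapsto k^{1-r}$ is concave, so the worst-case allocation is the one that equalizes marginal contributions. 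I would either (i) maximize the concave function $\sum_i \beta_i k_i^{1-r}$ over the simplex $\{\sum k_i = \tau\}$ using a Lagrange multiplier — the stationarity condition $\beta_i (1-r) k_i^{-r} = \lambda$ yields $k_i \propto \beta_i^{1/r}$, hence $k_i = \tau \beta_i^{1/r} / \sum_j \beta_j^{1/r}$, and plugging back in gives exactly $\big(\sum_i \beta_i^{1/r}\big)^{r} \tau^{1-r}$ up to the constant $(1-r)$ — or (ii) invoke Hölder's inequality directly: with exponents $p = 1/(1-r)$ and $q = 1/r$, $\sum_i \beta_i k_i^{1-r} = \sum_i \beta_i (k_i^{1-r}) \le \big(\sum_i \beta_i^{1/r}\big)^{r}\big(\sum_i k_i\big)^{1-r} = \big(\sum_i \beta_i^{1/r}\big)^{r}\tau^{1-r}$. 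The Hölder route is cleaner and avoids boundary-case fuss, so I would present that.

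The main obstacle is purely bookkeeping: being careful that the integral comparison $\sum_{k=1}^m k^{-r} \le 1 + \int_1^m x^{-r}\,dx$ is applied in the correct direction (we want an upper bound on $R_O$, so we want an upper bound on the sum), and that the additive $O(1)$ and $O(\beta_i)$ terms arising from the ``$k=1$'' correction are absorbed correctly into the stated $O(\cdot)$ — in the $r\in(0,1)$ case the extra $\sum_i \beta_i = \sum_i \beta_i \cdot 1^{1-r}$ is dominated by the Hölder bound since $k_{i,\tau}\ge 1$ whenever arm $i$ is played at least once (and arms never played contribute zero), and in the $r>1$ case it is exactly what we keep. A minor subtlety is that $k_{i,\tau}$ can be $0$ for some arms (those never selected), but then $g_i$ contributes nothing and $\beta_i^{1/r}$ only helps the bound, so no adjustment is needed. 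Assembling these pieces yields the three claimed regimes.
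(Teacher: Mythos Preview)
Your proposal is correct and follows essentially the same route as the paper: start from Lemma~\ref{lemma:o_regret_bound}, bound the inner sum $\sum_{k=1}^{k_{i,\tau}} k^{-r}$ by the standard integral/summation estimate in each regime, and for $r\in(0,1)$ apply H\"older with exponents $p=1/(1-r)$, $q=1/r$ together with $\sum_i k_{i,\tau}\le\tau$. The Lagrange-multiplier alternative you mention is not used in the paper, and your handling of the bookkeeping (the $k=1$ correction, arms with $k_{i,\tau}=0$) is slightly more careful than what the paper writes out.
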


\begin{proof}
For each base algorithm, we consider the inner sum over number of updates $\sum_{k=1}^{k_{i,\tau}} \frac{1}{k^r}$.

For $r \in (0,1)$, by the standard summation bound 
$ \sum_{k=1}^{k_{i,\tau}} \frac{1}{k^r} = O\bigl(k_{i,\tau}^{1-r}\bigr)$.
Then, using Hölder's inequality together with the budget constraint $\sum_{i=1}^K k_{i,\tau} \le \tau$, we obtain
\begin{equation*}
\sum_{i=1}^K \beta_i \sum_{k=1}^{k_{i,\tau}} \frac{1}{k^r} 
= O\left(\sum_{i=1}^K \beta_i k_{i,\tau}^{1-r}\right)
\le \left(\sum_{i=1}^K \beta_i^{\frac{1}{r}}\right)^{r} \tau^{1 -r}.
\end{equation*}
For $r = 1$, we have  $\sum_{k=1}^{k_{i,\tau}} \frac{1}{k} = O(\log k_{i,\tau}) \le O(\log \tau)$, which gives
$ R_O(\tau) \le O\left(\sum_{i=1}^K \beta_i \log \tau\right)$.
For $r > 1$, the sum is bounded by a constant, so $R_O(\tau) \le O\left(\sum_{i=1}^K \beta_i \right)$.
\end{proof}

\paragraph{BFI}
This paragraph focuses on the deterministic BFI problem, which aims to minimize the regret~$R_B$~(\ref{ex:reg_b}).
Theorem~\ref{th:bfi_det_bound} shows how many steps are required for Algorithm~\ref{alg:f_ucb} to get $R_B$ smaller than $\varepsilon$ from the selected $g_i(k)$-bounded base optimizers.

\begin{theorem}
\label{th:bfi_det_bound} 
Consider a deterministic BFI problem. 
We denote by $f^* = \min_{1\leq i\leq k} f_i^*$. 
To achieve regret $R_B(T) = \min\limits_{x \in \mathcal{D}_{J_T}} f_{J_T} (x) - f^*\leq \varepsilon$, Algorithm \ref{alg:f_ucb} requires at most
\begin{equation}
T = 1+\sum_{i=1}^k g_i^{-1}\left(\max\left[f_i^*-f^* - \frac{\varepsilon}{2}, \frac{\varepsilon}{2}\right]\right)
\end{equation} 
iterations, where $g_i^{-1}(\varepsilon) \triangleq \min \{\tau \mid f_i(x_t) - f_i^* \leq \varepsilon, \; \forall t \geq \tau\}$.
\end{theorem}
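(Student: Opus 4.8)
The plan is to analyze the stopping condition of Algorithm~\ref{alg:f_ucb} and bound how many iterations each arm can receive before the algorithm halts with a guarantee $R_B(T)\le\varepsilon$. The key structural fact, inherited from the LCB mechanism (cf. the proof of Lemma~\ref{lemma:o_regret_bound}), is that whenever arm $i_t$ is selected at time $t$, its lower confidence bound is the smallest, so $f_{i_t}(x^{i_t,k_{i_t,t}}) - g_{i_t}(k_{i_t,t}) = LCB_{i_t}(k_{i_t,t}) \le LCB_{i^*}(k_{i^*,t}) \le f_{i^*}^* = f^*$. This both (i) controls the correctness of the returned arm and (ii) forces suboptimal arms to stop being selected once their convergence bound $g_i(k_i)$ has shrunk enough that their LCB exceeds $f^*$.

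First I would establish the correctness of the stopping rule: when the algorithm returns $f_{i_t}$ at line~8 because $g_{i_t}(k_{i_t}+1,\delta)<\varepsilon/2$ (with $\delta=0$ here), the selected arm $J_T = i_t$ satisfies $f_{i_t}(x^{i_t,k_{i_t}+1}) - f^* \le g_{i_t}(k_{i_t}+1) < \varepsilon/2$, hence $f_{i_t}^* - f^* \le f_{i_t}(x^{i_t,k_{i_t}+1}) - f^* < \varepsilon/2 \le \varepsilon$, so $R_B(T)\le\varepsilon$. Next, I would bound the number of iterations spent on each arm $i$ before the global stopping time. An arm $i$ accumulates iterations only while it is being selected, and it is selected at time $t$ only if $LCB_i(k_{i,t}) \le LCB_j(k_{j,t})$ for all $j$; in particular this requires $LCB_i(k_{i,t}) \le f^*$, i.e. $f_i(x^{i,k_{i,t}}) - g_i(k_{i,t}) \le f^*$, which gives $g_i(k_{i,t}) \ge f_i(x^{i,k_{i,t}}) - f^* \ge f_i^* - f^*$. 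So arm $i$ can be selected only while $g_i(k_i) \ge f_i^* - f^*$; combined with the independent halting guard $g_i(k_i+1)\ge\varepsilon/2$, arm $i$ stops receiving updates once $g_i(k_i+1) < \max[f_i^*-f^* , \varepsilon/2]$ — and more precisely, once the LCB index is ``safe'' in the sense that $g_i$ has dropped below the relevant threshold. Inverting $g_i$ via the definition $g_i^{-1}(\eta) = \min\{\tau : f_i(x_t)-f_i^* \le \eta \ \forall t\ge\tau\}$ yields that the number of updates to arm $i$ is at most $g_i^{-1}(\max[f_i^*-f^*-\varepsilon/2, \varepsilon/2])$; the $-\varepsilon/2$ slack in the first branch comes from the fact that the relevant comparison is between $LCB_i = f_i(x^i) - g_i(k_i)$ and a competitor arm whose own objective is already within $\varepsilon/2$ of $f^*$ near termination, so the effective gap that $g_i$ must beat is $f_i^*-f^*-\varepsilon/2$ rather than $f_i^*-f^*$. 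Summing over $i=1,\dots,k$ and adding $1$ for the initialization round (line~1 counts as one batch of updates, or equivalently for the final triggering iteration) gives the claimed bound on $T$.

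The main obstacle I anticipate is making the $\varepsilon/2$ bookkeeping airtight: one must argue carefully that an arm $i$ is not selected once $g_i(k_i)$ falls below $f_i^*-f^*-\varepsilon/2$, which requires knowing that by the time this could matter, some near-optimal arm $i^*$ already has $LCB_{i^*} \ge f^* - \varepsilon/2$ (equivalently $g_{i^*}(k_{i^*})$ is not too large) — this is where the interplay between the selection rule and the stopping guard $g_{i_t}(k_{i_t}+1)<\varepsilon/2$ enters, since the algorithm terminates precisely when the currently-played arm's bound crosses $\varepsilon/2$. A clean way to handle this is to do a case split on whether $f_i^*-f^* \le \varepsilon$ or $> \varepsilon$: in the first case the guard $g_i(k_i+1)<\varepsilon/2$ alone caps the iteration count at $g_i^{-1}(\varepsilon/2)$; in the second case the selection rule caps it at roughly $g_i^{-1}(f_i^*-f^*-\varepsilon/2)$ because the algorithm cannot terminate on a different arm while arm $i$ still has the minimal LCB below $f^*$ — and if it does terminate on arm $i$ itself, then $i=J_T=i^*$ up to $\varepsilon$ and we are in the first regime anyway. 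Taking the max of the two thresholds uniformly over both cases gives the stated per-arm bound, and the argument that the total horizon is exactly $1 + \sum_i g_i^{-1}(\cdots)$ follows since at every round some arm gets updated and no arm is updated beyond its cap.
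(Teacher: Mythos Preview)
Your overall plan—cap each arm's pull count via the LCB selection inequality and the stopping guard, then sum—is exactly the paper's strategy. Two issues in the execution deserve attention.

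First, your correctness step asserts $f_{i_t}(x^{i_t,k_{i_t}+1}) - f^* \le g_{i_t}(k_{i_t}+1)$, i.e.\ $LCB_{i_t}(k_{i_t}+1)\le f^*$. But the selection at time $t$ only certifies $LCB_{i_t}(k_{i_t})\le f^*$, with the \emph{pre-update} index $k_{i_t}$; nothing compares the freshly updated $LCB_{i_t}(k_{i_t}+1)$ against the other arms before the algorithm halts. From $LCB_{i_t}(k_{i_t})\le f^*$ you only get $f_{i_t}^*-f^*\le g_{i_t}(k_{i_t})$, and the stopping guard controls $g_{i_t}(k_{i_t}+1)$, not $g_{i_t}(k_{i_t})$. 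This is not an innocuous off-by-one.

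Second, your account of the $-\varepsilon/2$ slack is over-engineered and points in the wrong direction. You never need the optimal arm to have converged to within $\varepsilon/2$ of $f^*$; the only fact required is that $LCB_{i^*}(k_{i^*,t})\le f^*$ holds at \emph{every} step, which is immediate from the $g_{i^*}$-bounded property of $\mathcal{A}_{i^*}$. The paper dispenses with both difficulties by a single contrapositive: suppose arm $j$ is selected with $k_{j,t}\ge k_j^*:=g_j^{-1}\bigl(f_j^*-f^*-\tfrac{\varepsilon}{2}\bigr)$ and $f_j^*-f^*>\varepsilon$. Then $g_j(k_{j,t})\le f_j^*-f^*-\tfrac{\varepsilon}{2}$, so
\[
LCB_j(k_{j,t}) \;\ge\; f_j^* - g_j(k_{j,t}) \;\ge\; f^* + \tfrac{\varepsilon}{2} \;>\; f^* \;\ge\; LCB_{i^*}(k_{i^*,t}),
\]
contradicting the selection rule. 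Hence any arm selected with $k_{j,t}\ge k_j^*$ already satisfies $f_j^*-f^*\le\varepsilon$, in which case $k_j^*=g_j^{-1}(\varepsilon/2)$ and the stopping guard triggers. Correctness and the $\varepsilon/2$ bookkeeping both fall out of this one contradiction, with no need to reason about how far the best arm has converged.
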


\begin{proof}
Assume without loss of generality that $f^* = f_1^*$. 
Let~$k_{i,t}$ denote the total number of times $f_i$ is updated in Algorithm~\ref{alg:f_ucb} \textit{(line ~6)} by iteration $t$.
Then, since in every iteration of Algorithm~\ref{alg:f_ucb} only one function is updated, the number of iterations $T$ can be computed as $T = \sum_{i=1}^K k_{i, T}$.  
Note that from the definition~\ref{def::gkdelta-alg} it follows that for all $k_{1,t}\geq 1$, we have $f_1(x^{1, t}) - g_1(k_{1,t}) \leq f_1^* = f^*$. 
At step $T-1 = \sum_{i=1}^K k_{i,T-1}$, there exists a subproblem $\mathcal{P}_j$ such that the corresponding function~$f_j$ was updated at least $k_j^* = g_j^{-1}(\max(f_j^*-f^* - \frac{\varepsilon}{2}, \frac{\varepsilon}{2}))$ times, i.e. $k_{j, T-1} \geq k_j^*$, since, otherwise, $\sum_{i=1}^k k_{i,T-1} < T-1$.
Note that if $k_{j, t} \geq k_j^*$, then
\begin{equation}
    g_j(k_{j,t}) \leq g_j(k^*_j),
    \label{eq::g_kstar}
\end{equation}
which means the larger the number of iterations, the smaller the gap between $f_j$ and $f^*_j$.%

Let us show that if Algorithm~\ref{alg:f_ucb} selects function $f_j$ \textit{(line ~6)} after making exactly $k_{j, t} =  k_j^*$ corresponding updates, then it reaches the stopping criterion \textit{(lines ~7-9)} and returns $f_j$ such that $f_j^* - f^* \leq \varepsilon$.
According to \textit{lines ~4} in Algorithm~\ref{alg:f_ucb} the following inequality holds:
\begin{equation}
    f_j(x^{j,t}) - g_j(k_{j,t})\leq f_1(x^{1,t}) - g_1(k_{1, t}) \leq f_1^* = f^*.
    \label{eq::hypothesis_step1}
\end{equation}
If $f_j^*-f^*> \varepsilon$, then $\max(f_j^*-f^* - \frac{\varepsilon}{2}, \frac{\varepsilon}{2}) = f_j^*-f^* - \frac{\varepsilon}{2}$ and $g(k_j^*) = f_j^*-f^* - \frac{\varepsilon}{2}$. 
Taking into account the inequality~(\ref{eq::g_kstar}), we have the following inequalities:

\begin{equation}
        f_j(x^{j,t}) - g_j(k_{j,t}) \geq f_j(x^{j,t}) - g_j(k^*_j) 
        = f_j(x^{j,t}) - (f_j^*-f_1^* - \frac{\varepsilon}{2})= f_1^* + (f_j(x^{j,t}) - f_j^*) + \frac{\varepsilon}{2} 
        \geq f_1^* + \frac{\varepsilon}{2}.
    \label{eq::hypothesis_step2}
\end{equation}

Thus, we have a contradiction with inequality~(\ref{eq::hypothesis_step1}), and conclude that $f_j^*-f^* \leq \varepsilon$.
Therefore, $\max(f_j^*-f^* - \frac{\varepsilon}{2}, \frac{\varepsilon}{2}) = \frac{\varepsilon}{2}$, $k_j^* = g_j^{-1}(\frac{\varepsilon}{2})$ and the stopping criterion (\textit{lines 7-9}) holds. So, we demonstrate that $k_{i,t} \leq k_i^*, i = 1,\ldots,K$.
Thus, if the stopping criterion was not reached before iteration $T-1$, then Algorithm~\ref{alg:f_ucb} made exactly $k_i^*$ updates of function~$f_i$ for $i=1,\ldots,K$ by the iteration $T-1$.
After that, in the iteration $T$, Algorithm~\ref{alg:f_ucb} selects and returns $f_j$ such that $k_{j, T-1} = k_j^*$, and the stopping criterion holds.
\end{proof}

\begin{corollary}[BFI polynomial]
If for each base algorithm $g_i(k)=\frac{\beta_i}{k^{r}}$ with $r>0,\ \beta_i>0$.
Then $g_i^{-1}(\varepsilon) = \min\left\{\tau \mid \frac{\beta_i}{\tau^{r}}\le \varepsilon\right\}
= \left\lceil \Bigl(\frac{\beta_i}{\varepsilon}\Bigr)^{1/r}\right\rceil$,
and to achieve \(R_B(T)\le \varepsilon\) Algorithm~\ref{alg:f_ucb} requires at most $T$ iterations, where 
$$
T = 1+\sum_{i=1}^k 
\Biggl\lceil\Biggl(\frac{\beta_i}
{\max\!\Bigl\{\,f_i^*-f^*-\frac{\varepsilon}{2}\,,\,\frac{\varepsilon}{2}\Bigr\}}
\Biggr)^{\!1/r}\Biggr\rceil .
$$
\end{corollary}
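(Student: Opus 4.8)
The plan is to specialize Theorem~\ref{th:bfi_det_bound} to the polynomial convergence rate $g_i(k)=\beta_i/k^{r}$. The only step with any content is to evaluate the generalized inverse $g_i^{-1}(\varepsilon)=\min\{\tau\mid f_i(x_t)-f_i^*\le\varepsilon,\ \forall t\ge\tau\}$ in closed form for this family; everything else is a direct substitution into the already-proven bound.

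First I would use that $\mathcal{A}_i$ is $g_i(k)$-bounded with $g_i(k)=\beta_i/k^{r}$, so that $f_i(x_t)-f_i^*\le \beta_i/t^{r}$ for every $t\in\mathbb{N}$, and that $t\mapsto\beta_i/t^{r}$ is strictly decreasing on $\mathbb{N}$. Consequently the right-hand side is $\le\varepsilon$ precisely when $t\ge(\beta_i/\varepsilon)^{1/r}$, and once it drops below $\varepsilon$ it stays below thereafter. Hence the set $\{\tau\mid \beta_i/t^{r}\le\varepsilon\ \text{for all}\ t\ge\tau\}$ coincides with $\{\tau\in\mathbb{N}\mid \tau\ge(\beta_i/\varepsilon)^{1/r}\}$, whose smallest element is $\lceil(\beta_i/\varepsilon)^{1/r}\rceil$. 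This gives $g_i^{-1}(\varepsilon)=\lceil(\beta_i/\varepsilon)^{1/r}\rceil$, which is well defined because the arguments we plug in below are strictly positive (the $\varepsilon/2$ branch is always positive).

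Then I would substitute $\varepsilon\leftarrow\max\{f_i^*-f^*-\varepsilon/2,\ \varepsilon/2\}$ into the closed form of $g_i^{-1}$ for each $i$ and into the bound of Theorem~\ref{th:bfi_det_bound}, obtaining
\[
T=1+\sum_{i=1}^{K} g_i^{-1}\!\Bigl(\max\{f_i^*-f^*-\tfrac{\varepsilon}{2},\ \tfrac{\varepsilon}{2}\}\Bigr)
=1+\sum_{i=1}^{K}\left\lceil\left(\frac{\beta_i}{\max\{f_i^*-f^*-\tfrac{\varepsilon}{2},\ \tfrac{\varepsilon}{2}\}}\right)^{\!1/r}\right\rceil,
\]
which is exactly the claimed iteration count guaranteeing $R_B(T)\le\varepsilon$. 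There is no genuine obstacle here: the statement is a corollary, and the proof is the one-line evaluation of $g_i^{-1}$ for the polynomial rate followed by substitution; the only point needing a moment's care is handling the ``for all $t\ge\tau$'' quantifier in the definition of $g_i^{-1}$, which is immediate from the monotonicity of $\beta_i/k^{r}$.
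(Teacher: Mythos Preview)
Your proposal is correct and matches the paper's implicit approach: the corollary is stated in the paper without proof, as an immediate specialization of Theorem~\ref{th:bfi_det_bound}, and your argument---compute $g_i^{-1}$ for the polynomial rate via monotonicity, then substitute---is exactly the intended one-line derivation. The only minor remark is that, under the literal definition $g_i^{-1}(\varepsilon)=\min\{\tau\mid f_i(x_t)-f_i^*\le\varepsilon\ \forall t\ge\tau\}$, one technically gets $g_i^{-1}(\varepsilon)\le\lceil(\beta_i/\varepsilon)^{1/r}\rceil$ rather than equality (the true iterates could converge faster than the bound), but this inequality is in the right direction for the iteration-count upper bound, and the paper itself silently identifies $g_i^{-1}$ with the inverse of the bound $g_i$ in both the corollary and the proof of Theorem~\ref{th:bfi_det_bound}.
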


\subsection{Stochastic case}
This section presents bounds for regrets $R_O$ and $R_B$, if $g_i(k, \delta)$-bounded base optimizers use inexact oracles.
After that, we consider particular classes of functions $f_i$, select the corresponding $g(k, \delta)$-bounded base optimizers, and derive the final regret bounds.

\paragraph{FMAB}
To prove the bound for $R_O$ regret, we define a \emph{clean event} as follows: 
$$
\mathcal{E}_{\text{clean}} \triangleq \bigcap_{i \in [K], t \in [T]} \left\{ f_i(x_{i,t}) - f_i^* \leq g(k_{i,t}, \delta) \right\}
$$


The probability of this event can be bounded from below. 
Indeed, if the concentration inequality $\mathbb{P}\left[ f_i(x_{i, t}) - f_i^* \geq g(k_{i, t}, \delta) \right] \leq \delta$ holds, then applying the union bound, we get: $\mathbb{P} [\mathcal E_{\text{clean}}] \geq 1 - TK\delta$.

\begin{conjecture}
Functions $f_i$ are bounded above, i.e. $\max_{1 \leq i \leq K}\max_{x_i \in D_i}f_i(x_i) \leq A$.
\label{assumption::1}
\end{conjecture}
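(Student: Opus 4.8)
The statement is presented as a standing assumption, so strictly speaking there is nothing to prove; however, I would argue that under the structural hypotheses already imposed in Section~\ref{sec:notation} it is in fact a \emph{consequence}, and the right thing to do is to exhibit the constant $A$ explicitly. The plan is: for each $i$, produce a finite upper bound for $f_i$ on all of $\mathcal{X}_i = D_i$ in terms of $R_i$, $L_i$, $M_i$ and the value and (sub)gradient of $f_i$ at one fixed feasible reference point, and then set $A$ to be the maximum of these $K$ finite numbers.

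Concretely, fix the reference point $x_0^{\mathcal{P}_i}\in\mathcal{X}_i$ that initializes $\mathcal{A}_i$ in Algorithm~\ref{alg:f_ucb} (any fixed feasible point works; one could equally take $x_i^*$ when the minimum is attained). The right-hand side of the function-class inequality of Section~\ref{sec:notation} gives, for every $y\in\mathcal{X}_i$,
\begin{equation*}
f_i(y)\ \le\ f_i(x_0^{\mathcal{P}_i}) + \langle f'_i(x_0^{\mathcal{P}_i}),\, y - x_0^{\mathcal{P}_i}\rangle + \frac{L_i}{2}\|y - x_0^{\mathcal{P}_i}\|_2^2 + M_i\|y - x_0^{\mathcal{P}_i}\|_2.
\end{equation*}
Bounding the inner product by Cauchy--Schwarz and using $\|y - x_0^{\mathcal{P}_i}\|_2\le R_i$ (the diameter of $\mathcal{X}_i$, assumed finite in Section~\ref{sec:notation}) yields
\begin{equation*}
f_i(y)\ \le\ f_i(x_0^{\mathcal{P}_i}) + \|f'_i(x_0^{\mathcal{P}_i})\|_2 R_i + \frac{L_i}{2}R_i^2 + M_i R_i\ =:\ A_i.
\end{equation*}
Each $A_i$ is finite, since the two-sided inequality of Section~\ref{sec:notation} forces $f_i$ to be finite-valued with a finite (sub)gradient on $\mathcal{X}_i$, and $R_i<\infty$. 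Taking $A=\max_{1\le i\le K}A_i$, a maximum over finitely many finite numbers, gives exactly $\max_i\max_{x_i\in D_i}f_i(x_i)\le A$, which is Assumption~\ref{assumption::1}.

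The only genuine obstacle is ensuring finiteness of the ingredients, and this is really a modelling point rather than a technical one. Finiteness of $R_i$ is the single place where the notation of Section~\ref{sec:notation} is being \emph{used} rather than derived: if unbounded decision sets $\mathcal{X}_i$ were allowed, the claim could genuinely fail (e.g.\ an affine $f_i$ unbounded above), which is presumably why the authors flag it as an assumption rather than a lemma. Under bounded domains, the remaining subtlety is merely to pick the reference point inside $\mathcal{X}_i$; the base-optimizer initialization $x_0^{\mathcal{P}_i}$ is the natural choice and is feasible by construction, so no extra regularity (closedness, attainment of the minimum, differentiability of $f_i$ beyond the class inequality) is needed. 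I would therefore phrase the ``proof'' as: under the standing finiteness of the diameters $R_i$ and the function-class inequality, Assumption~\ref{assumption::1} holds with the explicit constant $A=\max_i\bigl(f_i(x_0^{\mathcal{P}_i}) + \|f'_i(x_0^{\mathcal{P}_i})\|_2 R_i + \tfrac{L_i}{2}R_i^2 + M_i R_i\bigr)$; in the absence of bounded domains it must be imposed as a hypothesis, exactly as done here.
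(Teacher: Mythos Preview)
Your reading is correct: the paper states this as a standalone hypothesis (the \texttt{conjecture} environment is renamed ``Assumption'') and offers no proof whatsoever, so there is nothing in the paper to compare your argument against. Your derivation is valid and is a worthwhile observation: under the two-sided function-class inequality of Section~\ref{sec:notation} together with finiteness of the diameters $R_i$, the bound follows with the explicit constant you give.

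The only caveat is contextual rather than mathematical. The paper invokes Assumption~\ref{assumption::1} precisely in the stochastic subsection, where it explicitly says the base optimizers of~\cite{lan2020first,sadiev2023high} ``are developed for unconstrained stochastic optimization problems''; in that regime $R_i=\infty$ and your derivation does not apply, which is exactly why the authors impose boundedness of $f_i$ as a separate hypothesis rather than deducing it. So your final paragraph already has the right framing: the statement is a genuine assumption in the unbounded-domain case the paper cares about here, and a consequence (with your explicit $A$) in the bounded-domain case treated elsewhere in the paper.
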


With Assumption~\ref{assumption::1}, we get the following regret bound:

\begin{equation}
    \mathbb{E}[R_O(T)] \leq
    \mathbb{E}[R_O(T) \mid {\mathcal{E}_{\text{clean}}}] + KT^2 A \delta.
\label{eq::exp_regret}
\end{equation}

\begin{lemma}
\label{lemma:o_stochastic_regret}
Assume $\mathcal{A}_i$ be $g_i(k, \delta)$-bounded algorithms for the corresponding problem $\min_{x\in \mathcal{D}_i} f_i(x)$ for each $1\leq i \leq K$. 
Then for Algorithm~\ref{alg:f_ucb} the following inequality holds:
\begin{equation}
\mathbb{E}\left[R_O(T)\right] \leq \mathbb{E}\left[ \sum_{i=1}^{K} \sum_{t=1}^{k_{i, T}} g_i(t ,\delta) \Bigg\vert \mathcal{E}_{\text{clean}} \right] + \delta KT^2 \cdot A,
\label{eq:regret_bound}
\end{equation}
where $A = \max_{1 \leq i \leq K} \max_{x_i\in D_i} f_i(x_i)$.
\end{lemma}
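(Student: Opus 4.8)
\textbf{Proof plan for Lemma~\ref{lemma:o_stochastic_regret}.}

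The plan is to combine the decomposition~(\ref{eq::exp_regret}) with a conditional version of the deterministic argument from Lemma~\ref{lemma:o_regret_bound}. First I would start from the law of total expectation applied to $R_O(T)$, splitting on $\mathcal{E}_{\text{clean}}$ and its complement: $\mathbb{E}[R_O(T)] = \mathbb{E}[R_O(T)\mid \mathcal{E}_{\text{clean}}]\,\mathbb{P}[\mathcal{E}_{\text{clean}}] + \mathbb{E}[R_O(T)\mid \overline{\mathcal{E}_{\text{clean}}}]\,\mathbb{P}[\overline{\mathcal{E}_{\text{clean}}}]$. For the second term, I would bound $\mathbb{P}[\overline{\mathcal{E}_{\text{clean}}}] \le TK\delta$ by the union bound over the $TK$ events (as stated just before Assumption~\ref{assumption::1}), and bound $R_O(T) = \sum_{t=1}^T (f_{i_t}(x^{t,i_t}) - f^*) \le T\cdot(A - f^*) \le TA$ per-step using Assumption~\ref{assumption::1} (here one may assume $f^* \ge 0$ or simply absorb it, matching the paper's loose constant $KT^2 A$); multiplying gives the additive $\delta K T^2 A$ term, and $\mathbb{P}[\mathcal{E}_{\text{clean}}] \le 1$ handles the first term's prefactor. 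This reproduces exactly the form~(\ref{eq::exp_regret}), so the remaining work is to bound $\mathbb{E}[R_O(T)\mid \mathcal{E}_{\text{clean}}]$ by $\mathbb{E}\bigl[\sum_{i=1}^K \sum_{t=1}^{k_{i,T}} g_i(t,\delta)\bigm| \mathcal{E}_{\text{clean}}\bigr]$.

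For that conditional bound I would replay the deterministic proof pointwise on the event $\mathcal{E}_{\text{clean}}$. On $\mathcal{E}_{\text{clean}}$, for every arm $j$ and every time $t$ we have $f_j(x^{j,k_{j,t}}) - g_j(k_{j,t},\delta) \le f_j^*$, hence $f_j(x^{j,k_{j,t}}) - g_j(k_{j,t},\delta) \le f_j^*$ remains a valid lower confidence bound. Since $i_t$ minimizes the LCB index, $LCB_{i_t}(k_{i_t,t},\delta) \le LCB_{i^*}(k_{i^*,t},\delta) \le f_{i^*}(x^{i^*,k_{i^*,t}}) - g_{i^*}(k_{i^*,t},\delta) \le f_{i^*}^* = f^*$, where the last inequality on the best arm uses $\mathcal{E}_{\text{clean}}$ again. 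Therefore $f_{i_t}(x^{t,i_t}) - f^* \le f_{i_t}(x^{t,i_t}) - LCB_{i_t}(k_{i_t,t},\delta) = g_{i_t}(k_{i_t,t},\delta)$, exactly the inequality $(\star)$ from the deterministic case but now valid on $\mathcal{E}_{\text{clean}}$. Summing over $t=1,\dots,T$ and regrouping the terms by arm — using that the sequence of counters $k_{i_t,t}$ for a fixed arm $i$ runs through $1,2,\dots,k_{i,T}$ — yields $R_O(T) \le \sum_{i=1}^K \sum_{t=1}^{k_{i,T}} g_i(t,\delta)$ pointwise on $\mathcal{E}_{\text{clean}}$, and taking conditional expectation preserves the inequality.

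The main subtlety — rather than a deep obstacle — is being careful that the random counters $k_{i,T}$ and the selected indices $i_t$ are themselves random variables that remain inside the conditional expectation; the regrouping step is a pathwise identity valid for every realization in $\mathcal{E}_{\text{clean}}$, so it commutes with $\mathbb{E}[\cdot \mid \mathcal{E}_{\text{clean}}]$ without needing independence or measurability gymnastics. A secondary point to state explicitly is that the concentration hypothesis $\mathbb{P}[f_i(x_{i,t}) - f_i^* \ge g_i(k_{i,t},\delta)] \le \delta$, which underlies the $1 - TK\delta$ bound on $\mathbb{P}[\mathcal{E}_{\text{clean}}]$, is precisely the definition of a $g_i(k,\delta)$-bounded algorithm (Definition~\ref{def::gkdelta-alg}) applied at each count value; with inexact oracles one should note the per-step validity and then union-bound, which is what the text preceding the lemma already records. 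Assembling the two pieces gives~(\ref{eq:regret_bound}).
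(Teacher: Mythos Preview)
Your proposal is correct and follows essentially the same approach as the paper's proof: split on $\mathcal{E}_{\text{clean}}$ via total expectation to recover~(\ref{eq::exp_regret}), then replay the deterministic LCB argument of Lemma~\ref{lemma:o_regret_bound} pathwise on $\mathcal{E}_{\text{clean}}$ to obtain the per-step bound $(\star)$ and regroup by arms. Your write-up is in fact more careful than the paper's about the role of the random counters $k_{i,T}$ inside the conditional expectation and about the use of Definition~\ref{def::gkdelta-alg} in the union bound; the only loose end---shared with the paper---is the implicit assumption $f^*\ge 0$ (or a suitable shift) when bounding $R_O(T)\le TA$ on the complement event.
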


\begin{proofidea}
Expected regret~(\ref{eq::exp_regret}) under a complement to a clean event is lower than $\delta KT^2 A$.  
In the conditions of a clean event, for all realizations of the optimization process, the regret is bounded as in Theorem~\ref{lemma:o_regret_bound}.
It only remains to add them up according to~(\ref{eq::ro_fmab_det}). 
Here, expectation is over realization in a clean event.
\hfill $\square$

The complete proof is presented in Appendix~\ref{sec::appendix::fmab_proof}.
\end{proofidea}

\begin{theorem}
\label{thm::stochastic_convergence}
    Let $r > 0$. 
    Assume $\mathcal{A}_i$ ($i=1,\ldots,K$) are $g_i(k, \delta)$-bounded base algorithms with $g_i(k, \delta) = \frac{\beta_i}{k^r}c(\delta)$. set $\delta = \frac{1}{KT^2A}$.
    Then for Algorithm~\ref{alg:f_ucb}, for all $\tau \in \overline{1, T}$ the following regret bounds hold:
    \begin{itemize}
        \item if $r \in (0, 1)$: $R_O(\tau) \leq O\left(\left(\sum_{i=1}^K \beta_i^{\frac{1}{r}}\right)^{r} \tau^{1-r}c(\frac{1}{KT^2A})\right)$;
        \item if $r = 1$: $R_O(\tau) \leq  O\left(\sum_{i=1}^K \beta_i \log (\tau)  c(\frac{1}{KT^2A})\right)$;
        \item if $r > 1$: $R_O(\tau) \leq  O\left((\sum_{i=1}^K \beta_i) c(\frac{1}{KT^2A})\right)$.
    \end{itemize}
\end{theorem}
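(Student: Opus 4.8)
The proof combines Lemma~\ref{lemma:o_stochastic_regret} with the deterministic summation estimates already carried out in Theorem~\ref{thm:regret_alpha}. The key observation is that the stochastic convergence rate $g_i(k,\delta)=\frac{\beta_i}{k^r}c(\delta)$ factorizes: the $k$-dependence is exactly the polynomial rate $\beta_i/k^r$ handled in the deterministic case, and the $\delta$-dependence $c(\delta)$ is a constant with respect to the summation over $k$. So the first step is to invoke Lemma~\ref{lemma:o_stochastic_regret} to write
\begin{equation*}
\mathbb{E}[R_O(\tau)] \;\le\; \mathbb{E}\!\left[\sum_{i=1}^K \sum_{k=1}^{k_{i,\tau}} \frac{\beta_i}{k^r}\,c(\delta)\,\Bigg|\,\mathcal{E}_{\text{clean}}\right] + \delta K T^2 A .
\end{equation*}
Pulling $c(\delta)$ out of the inner sum, the bracketed term is $c(\delta)$ times precisely the quantity bounded in Theorem~\ref{thm:regret_alpha}, under the same budget constraint $\sum_i k_{i,\tau}\le \tau$ (which holds pathwise, hence under the conditioning).

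\textbf{Main steps.} Second, I would apply the three cases of Theorem~\ref{thm:regret_alpha} verbatim to the inner double sum: for $r\in(0,1)$ it is $O\!\big((\sum_i \beta_i^{1/r})^r \tau^{1-r}\big)$ via Hölder and the budget constraint; for $r=1$ it is $O\!\big(\sum_i \beta_i \log\tau\big)$; for $r>1$ it is $O\!\big(\sum_i \beta_i\big)$. Since these are deterministic bounds valid for every realization of the $k_{i,\tau}$'s satisfying the budget constraint, the conditional expectation is bounded by the same expression. Third, I would substitute $\delta = \tfrac{1}{KT^2A}$, which makes the additive term $\delta K T^2 A = 1 = O(1)$, absorbing it into the $O(\cdot)$ — in each of the three cases the leading term dominates a constant (for $r>1$ one may note $\sum_i\beta_i$ and the constant are of the same order, or simply that $c(\cdot)$ is assumed bounded below away from zero, so $O(1)\subseteq O\big((\sum_i\beta_i)c(\tfrac{1}{KT^2A})\big)$). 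Finally, evaluating $c$ at this choice of $\delta$ gives the stated factor $c\!\big(\tfrac{1}{KT^2A}\big)$ in each bound, and replacing $\tau$ by the generic horizon variable completes the three cases.

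\textbf{Anticipated obstacle.} The only genuinely delicate point is the bookkeeping of the additive term $\delta K T^2 A$ and its interaction with the $O(\cdot)$ notation: strictly, the statement writes the bound for all $\tau\in\overline{1,T}$ but $\delta$ is tuned to the full horizon $T$, so one must be careful that the clean-event probability bound $1-TK\delta$ and the $A$-truncation argument behind~(\ref{eq::exp_regret}) are applied with $T$, not $\tau$, while the summation estimate uses $\tau$; this is consistent since $\mathcal{E}_{\text{clean}}$ is defined over the whole horizon and $k_{i,\tau}\le\tau\le T$. A secondary subtlety is justifying that $c(\delta)$ can be extracted from the conditional expectation — this is immediate because $c(\delta)$ is deterministic once $\delta$ is fixed, but it is worth stating explicitly. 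Beyond that the argument is a direct composition of two results already in hand, so I expect no real difficulty.
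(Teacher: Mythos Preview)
Your proposal is correct and matches the paper's intended approach: the paper does not spell out a standalone proof of Theorem~\ref{thm::stochastic_convergence} but positions it as the immediate consequence of Lemma~\ref{lemma:o_stochastic_regret} combined with the summation estimates of Theorem~\ref{thm:regret_alpha}, with $\delta=\tfrac{1}{KT^2A}$ chosen so that the additive $\delta KT^2A$ term collapses to $O(1)$. Your handling of the $\tau$-versus-$T$ bookkeeping and the factoring of $c(\delta)$ out of the conditional expectation is exactly the right level of care for the points the paper glosses over.
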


\noindent
The multiplier $c(\delta)$ typically grows at most polylogarithmically with respect to $\frac{1}{\delta}$, i.e. $c(\delta) = \mathrm{polylog}\left(\tfrac{1}{\delta}\right)$,
as is standard in high-probability convergence bounds for stochastic first-order methods (see, e.g., \cite{lan2020first,sadiev2023high}). 
Consequently, $\delta$ can be chosen to decay polynomially with $T$ to ensure that the additive term $\delta K T^2 A$ remains negligible compared to the main regret term, while only affecting~$c(\delta)$ by a logarithmic factor.


Further, we consider algorithms from~\cite{lan2020first, sadiev2023high} with known convergence rates $g(t, \delta)$.
These algorithms are developed for unconstrained stochastic optimization problems that typically appear in applications~\cite{wallace2005applications}.
Logarithmic factors in the complexity bounds are omitted for clarity.
Convergence guarantees of considered algorithms rely on the following assumptions.

\begin{conjecture}
    \label{assumption::noise_bound}
    The algorithm $\mathcal{A}_i$ has access to the unbiased stochastic first-order oracle, returning $G_i(x, \xi)$. 
    There exists a set~$\mathcal{X}_i \in \mathbb R^d$ and values $\sigma \geq 0, \alpha \in (1, 2]$ such that for all 
    $x \in \mathcal{X}_i:$ 
    $\mathbb{E}_\xi \left[\|  G_i(x, \xi) - \nabla f_i(x)\|^\alpha \right] \leq \sigma^\alpha$.
\end{conjecture}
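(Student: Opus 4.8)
The final statement labeled \texttt{Conjecture}~\ref{assumption::noise_bound} (rendered as ``Assumption'') is not a proposition requiring proof but a structural hypothesis on the stochastic first-order oracle available to each base algorithm~$\mathcal{A}_i$. It asserts the existence of an unbiased oracle $G_i(x,\xi)$ together with a bounded $\alpha$-th central moment condition $\mathbb{E}_\xi[\|G_i(x,\xi)-\nabla f_i(x)\|^\alpha]\le\sigma^\alpha$ for some $\alpha\in(1,2]$ uniformly over $x\in\mathcal{X}_i$. Accordingly, there is nothing to derive: the plan is simply to record that this is an assumption inherited from the high-probability stochastic-optimization literature (e.g.\ \cite{lan2020first, sadiev2023high}), under which the $g_i(k,\delta)$-boundedness in the sense of Definition~\ref{def::gkdelta-alg} holds for the corresponding methods, and which therefore makes Lemma~\ref{lemma:o_stochastic_regret} and Theorem~\ref{thm::stochastic_convergence} applicable.

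If one nonetheless wants a short justification of \emph{why} this assumption is the natural one to impose, the remarks to include are: (i) it is strictly weaker than the classical sub-Gaussian or bounded-variance ($\alpha=2$) noise model, since for $\alpha<2$ even the variance may be infinite, so it captures heavy-tailed gradient noise as arising in the FMAB-to-MAB reductions of Section~\ref{sec::mab_vs_fmab}; (ii) unbiasedness $\mathbb{E}_\xi[G_i(x,\xi)]=\nabla f_i(x)$ is what allows telescoping/martingale arguments in the analyses of $\mathcal{A}_i$; and (iii) uniformity over $\mathcal{X}_i$ is needed so that the per-iteration noise bound does not degrade as the iterates move, which is what yields a clean convergence rate of the factorized form $g_i(k,\delta)=\tfrac{\beta_i}{k^{r}}c(\delta)$ with $c(\delta)=\mathrm{polylog}(1/\delta)$ used in Theorem~\ref{thm::stochastic_convergence}.

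There is no main obstacle, because there is no claim to prove. The only thing to be careful about is presentation: one should state explicitly that Assumption~\ref{assumption::noise_bound} is what lets us invoke the cited convergence guarantees, and that the exponent $r$ and the prefactor $\beta_i$ in $g_i(k,\delta)$ depend on the function class (smooth, strongly convex, Lipschitz, etc.) and on $\alpha$ and $\sigma$ through those external results. If the intent of the paragraph is instead to then \emph{use} this assumption to instantiate $g_i(k,\delta)$ for concrete methods, that material belongs in the subsequent derivation (plugging the rates of \cite{lan2020first, sadiev2023high} into Theorem~\ref{thm::stochastic_convergence}), not in a proof of the assumption itself.
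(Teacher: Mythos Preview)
Your assessment is correct: the statement is an assumption (the environment \texttt{conjecture} is aliased to ``Assumption'' in this paper), not a claim requiring proof, and the paper accordingly provides no proof for it. Your explanation of its role---as the hypothesis under which the cited base optimizers from \cite{lan2020first,sadiev2023high} become $g_i(k,\delta)$-bounded so that Lemma~\ref{lemma:o_stochastic_regret} and Theorem~\ref{thm::stochastic_convergence} apply---matches exactly how the paper uses it.
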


\begin{conjecture}
\label{assumption::noise_tails}
For any $x$ we have: 
$\mathbb{E} \exp\{\| G_i(x, \xi) - \nabla f_i(x) \|^2 /\sigma_i^2\} \leq 1.$
\end{conjecture}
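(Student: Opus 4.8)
Assumption~\ref{assumption::noise_tails} is a structural hypothesis imposed on the stochastic first-order oracle rather than a statement derivable from more primitive facts, so there is no self-contained ``theorem'' to establish here. The useful content at this point in the paper is (i) to check that the class of oracles satisfying the condition is non-empty and natural, and (ii) to record the two consequences that the subsequent high-probability analysis actually invokes: a second-moment bound of the type appearing in Assumption~\ref{assumption::noise_bound}, and an exponential (Chernoff) tail estimate. I would therefore organize a short argument around these items. For non-vacuousness, the exact oracle $G_i(x,\xi)\equiv\nabla f_i(x)$ makes the left-hand side equal to $1$; more generally, any oracle whose error $Z_i:=G_i(x,\xi)-\nabla f_i(x)$ is Gaussian, or almost surely bounded, satisfies the condition after rescaling $\sigma_i$, which is exactly the regime in which the convergence rates of~\cite{lan2020first,sadiev2023high} are designed.

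\textbf{From the sub-Gaussian norm condition to Assumption~\ref{assumption::noise_bound}.} Applying Jensen's inequality to the convex map $u\mapsto e^{u}$ gives $\exp\!\big(\mathbb{E}\|Z_i\|^2/\sigma_i^2\big)\le\mathbb{E}\exp\!\big(\|Z_i\|^2/\sigma_i^2\big)$, and combining with the assumed bound yields $\mathbb{E}\|Z_i\|^2\le\sigma_i^2$ (in the standard normalization where the right-hand side reads $e$ rather than the literal $1$; see the obstacle below). Since the oracle is unbiased by construction, this is Assumption~\ref{assumption::noise_bound} with $\alpha=2$ and $\sigma=\max_i\sigma_i$; for $\alpha\in(1,2)$ concavity of $t\mapsto t^{\alpha/2}$ gives $\mathbb{E}\|Z_i\|^\alpha\le(\mathbb{E}\|Z_i\|^2)^{\alpha/2}\le\sigma_i^\alpha$, so Assumption~\ref{assumption::noise_tails} sits at the top of the hierarchy of noise models used in the paper.

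\textbf{The tail bound the regret analysis uses.} A Chernoff argument turns the moment-generating bound into a deviation inequality: for any $\lambda>0$, by Markov $\mathbb{P}(\|Z_i\|\ge\lambda)=\mathbb{P}\!\big(e^{\|Z_i\|^2/\sigma_i^2}\ge e^{\lambda^2/\sigma_i^2}\big)\le e\cdot e^{-\lambda^2/\sigma_i^2}$, i.e. $\|Z_i\|$ is $O(\sigma_i)$-sub-Gaussian. Mini-batching (the oracle being sampled several times at each point, with $\mathcal{A}_i$ controlling the count) further tightens the effective deviation constant while keeping the exponential tail; this is precisely what feeds the clean-event union bound behind Lemma~\ref{lemma:o_stochastic_regret} and produces the polylogarithmic factor $c(\delta)=\mathrm{polylog}(1/\delta)$ in Theorem~\ref{thm::stochastic_convergence}.

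\textbf{Main obstacle.} There is no analytic difficulty; the only real pitfall is bookkeeping of constants. Taken literally with right-hand side $1$, the condition forces $\exp(\mathbb{E}\|Z_i\|^2/\sigma_i^2)\le 1$, hence $Z_i\equiv 0$ --- a degenerate (noiseless) oracle. One must therefore read the condition in the standard form $\mathbb{E}\exp(\|Z_i\|^2/\sigma_i^2)\le e$ (equivalently $\mathbb{E}\exp(\|Z_i\|^2/(e\sigma_i^2))\le 2$) and then propagate the resulting $O(1)$ factors consistently through the Jensen and Chernoff steps. A secondary point of care: the base algorithms $\mathcal{A}_i$ of~\cite{lan2020first,sadiev2023high} are stated on unconstrained domains, so one should make sure the set $\mathcal{X}_i$ on which Assumptions~\ref{assumption::noise_bound}--\ref{assumption::noise_tails} are posed contains all iterates produced by $\mathcal{A}_i$.
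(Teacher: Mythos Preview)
You are correct that this is an assumption, not a theorem, and the paper provides no proof for it; your identification of this point is exactly right, so in that sense there is nothing to compare. Your additional discussion (non-vacuousness, the Jensen argument deriving Assumption~\ref{assumption::noise_bound} with $\alpha=2$, and the Chernoff tail bound) goes well beyond the paper's treatment, which simply states the condition and defers to~\cite{lan2020first} for its use; this extra material is sound and pedagogically helpful. Your observation about the constant is also a genuine catch: as written with right-hand side $1$, Jensen forces $\mathbb{E}\|Z_i\|^2\le 0$ and hence a noiseless oracle, so the intended normalization is almost certainly $\mathbb{E}\exp\{\|Z_i\|^2/\sigma_i^2\}\le \exp(1)$ as in the cited reference.
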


The resulting $R_O$ regret bounds for the stochastic setup are summarized in Table~\ref{table:stochastic}. 

The proofs for these bounds are based on Theorem~\ref{thm::stochastic_convergence}, available $g(k, \delta)$ in base optimizers, and are presented in Appendix~\ref{sec::appendix::stoch_fmab}.

\begin{remark}
In the stochastic assumptions, $\alpha \in (1,2]$ characterizes the noise distribution. 
For the considered base algorithms, the convergence rate parameter $r$ is an explicit function of $\alpha$: 
for \algname{clipped-SSTM} (convex problems) $r = 1 - \frac{1}{\alpha}$, 
and for \algname{R-clipped-SSTM} (strongly convex problems) $r = 2\left(1 - \frac{1}{\alpha}\right)$.
\end{remark}

\begin{table}[!ht]
    \centering
    \caption{Regret bounds for FMAB problem in the stochastic case. 
    All algorithms require Assumption~\ref{assumption::1}. SSTM algorithms~\cite{sadiev2023high} require Assumption~\ref{assumption::noise_bound} ($\alpha \in (1,2]$). 
    AGD~\cite{lan2020first} requires Assumption~\ref{assumption::noise_bound} ($\alpha=2$) and Assumption~\ref{assumption::noise_tails}.}
    \resizebox{\linewidth}{!}{
    \begin{tabular}{cccccc}
    \toprule
    Function & Base optimizer & $R_O(T)$ \\
       \midrule
    Convex $L$-smooth &  \algname{clipped-SSTM} & $O\left(\max\left[KLR^2, \alpha \sigma R K^{1 - \frac{1}{\alpha}} T^{\frac{1}{\alpha}}\log(AKT)\right]\right)$ \\
    $\mu$-strongly convex, $L$-smooth & \algname{R-clipped-SSTM}  & $O\left(\max \left[K\sqrt{\frac{L}{\mu}} ,{\frac{\sigma^2}{\mu} K^{2\frac{\alpha -1}{\alpha}} T^{\frac{2}{\alpha} -1}}\log(AKT)\right]\right)$ \\
    $\mu$-strongly convex, $M$-Lipschitz & \algname{AGD} & $O\left( \sqrt{KT}\sigma R \log{(AKT)} \right)$ \\
    \bottomrule
    \end{tabular}
    }
    \label{table:stochastic}
\end{table}

\section{Numerical experiments}
\label{sec: Experiments}
To illustrate the performance of the proposed approach, we consider synthetic test cases for convex smooth and nonsmooth functions.
The case of smooth convex functions with inexact first-order oracles is also included in our experimental evaluation.
Finally, we consider the CIFAR100 image classification task and use the \algname{F-LCB} algorithm to automatically identify the best neural network from the given candidate set.
We share the source code in the repository at \url{https://github.com/IAIOnline/FMAB} to reproduce the presented results.

\begin{figure}[!ht]
\centering
\begin{subfigure}[t]{0.31\linewidth}
\includegraphics[width=\textwidth]{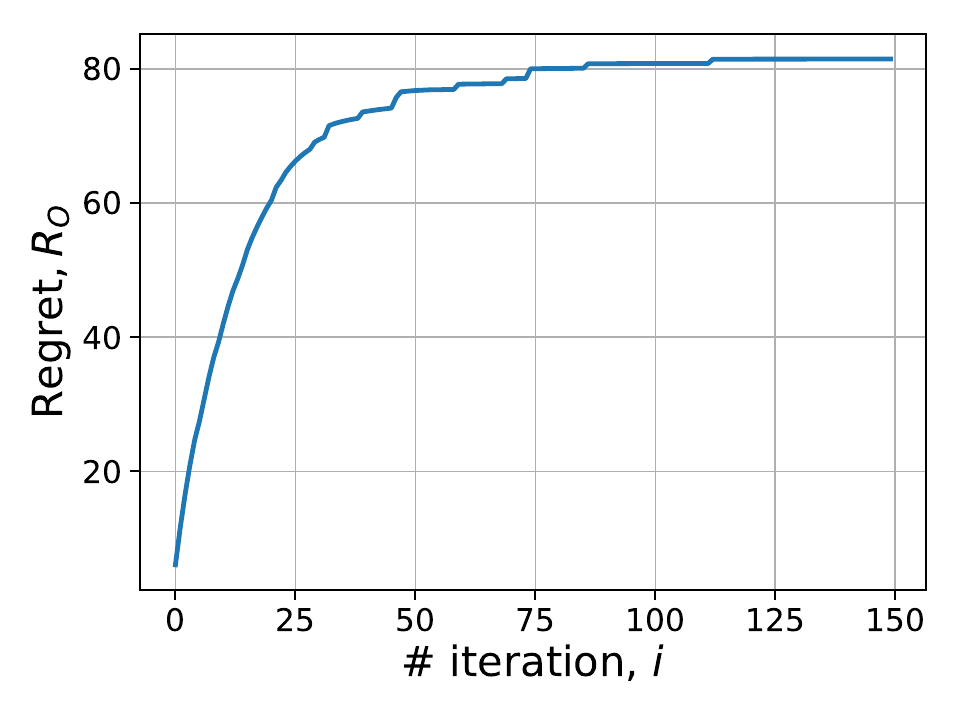}
\caption{Cumulative regret}
\end{subfigure}
~
\begin{subfigure}[t]{0.31\linewidth}
\includegraphics[width=\textwidth]{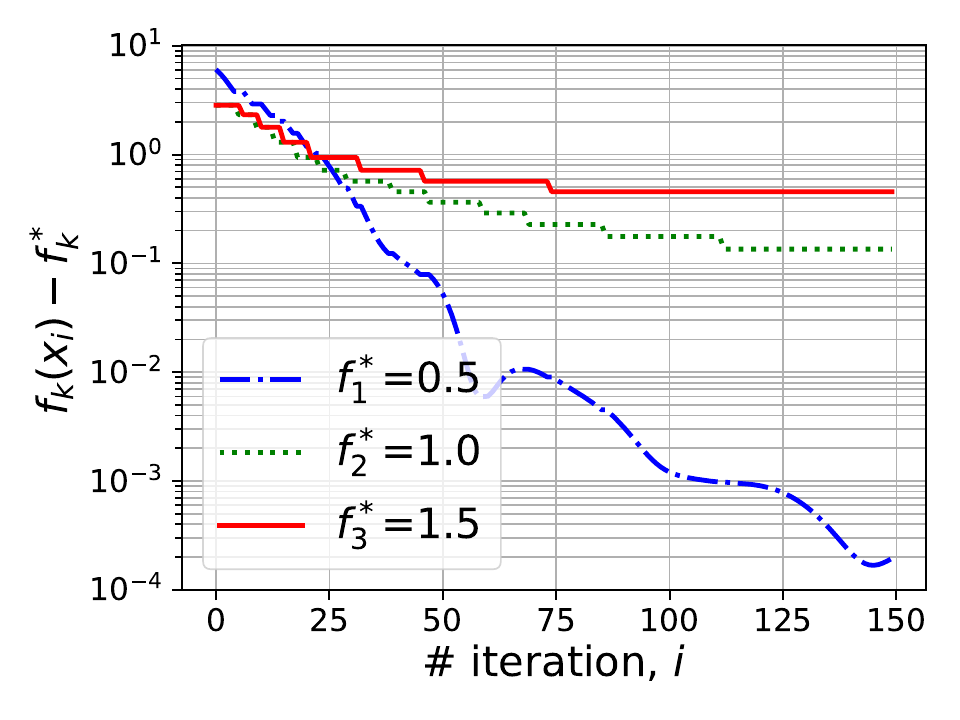}
\caption{Convergence rate}
\end{subfigure}
~
\begin{subfigure}[t]{0.31\linewidth}
\includegraphics[width=\textwidth]{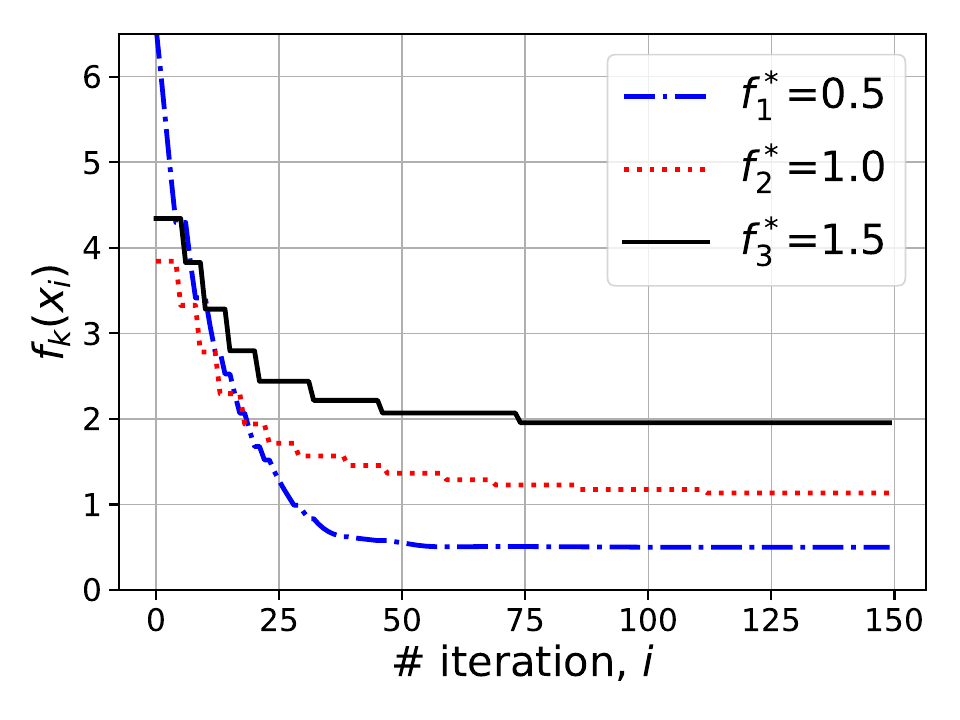}
\caption{Function values}
\label{fig::functions_value_smooth}
\end{subfigure}
\caption{Dependence of cumulative regret (upper left), convergence rate (middle), and function values (right) on iterations of \algname{F-LCB} algorithms for FMAB setup with smooth convex functions~(\ref{eq:exp_smooth}). 
Regret stops increasing after 75 iterations, and \algname{F-LCB} minimizes only $f_1$ with $f_1^* < f_2^*$ and $f_1^* < f_3^*$.}
\label{fig::smooth_convex}

\end{figure}

\subsection{FMAB: smooth convex functions}
We consider the following set of smooth convex functions:
\begin{equation}
\label{eq:exp_smooth}
f_i(\mathbf{x}) = \sqrt{1 + (\mathbf{x}- \mathbf{x}_i^*)^\top \boldsymbol{\Sigma}_i (\mathbf{x} - \mathbf{x}_i^*)} + c_i,
\end{equation}
where $\boldsymbol{\Sigma}_i = \mathrm{diag}(\sigma_1, \ldots, \sigma_d)$ is a diagonal matrix, where $\sigma_i > 0$. 
We set $\sigma_1 = 1$ and $\sigma_i = \exp(-5 \xi)$, where $\xi\sim U[0,1]$ for $i =2, \ldots, d$.
Here, functions are not strongly convex but have a Lipschits gradient with $L_i = \max_{i=1,\ldots, d} \sigma_i $.
We use accelerated gradient descent~\cite{su2016differential,nesterov1983method} as a base optimizer in this setup.
According to~\cite{taylor2017exact}, the function $g$ has the following form for this base optimizer: $g_i(t) = \frac{2L_i\|\mathbf{x}^{0,i}-\mathbf{x}_\star\|^2}{t^2 + 5 t + 6}$ and hense the regret is bounded by constant $O(KLR^2)$.
We generate $K = 3$ instances of functions with $d=20$ and run the algorithm for $T = 200$ steps.
Figure~\ref{fig::smooth_convex} shows that \algname{F-LCB} algorithm automatically selects the function with the smallest optimal value.
After some iterations, it minimizes only this function, while the target variables for other functions are not updated.
The stepwise decreasing of $f_3$ in Figure~\ref{fig::functions_value_smooth} illustrates such behavior.
Thus, \algname{F-LCB} identifies the smooth convex function $f_{1}$ among other similar functions $\{ f_2, f_3 \}$ such that $f_1^* < f^*_2$ and $f_1^* < f^*_3$.

\begin{figure}[ht]
\centering
\begin{subfigure}[t]{0.31\linewidth}
\centering
\includegraphics[width=\textwidth]{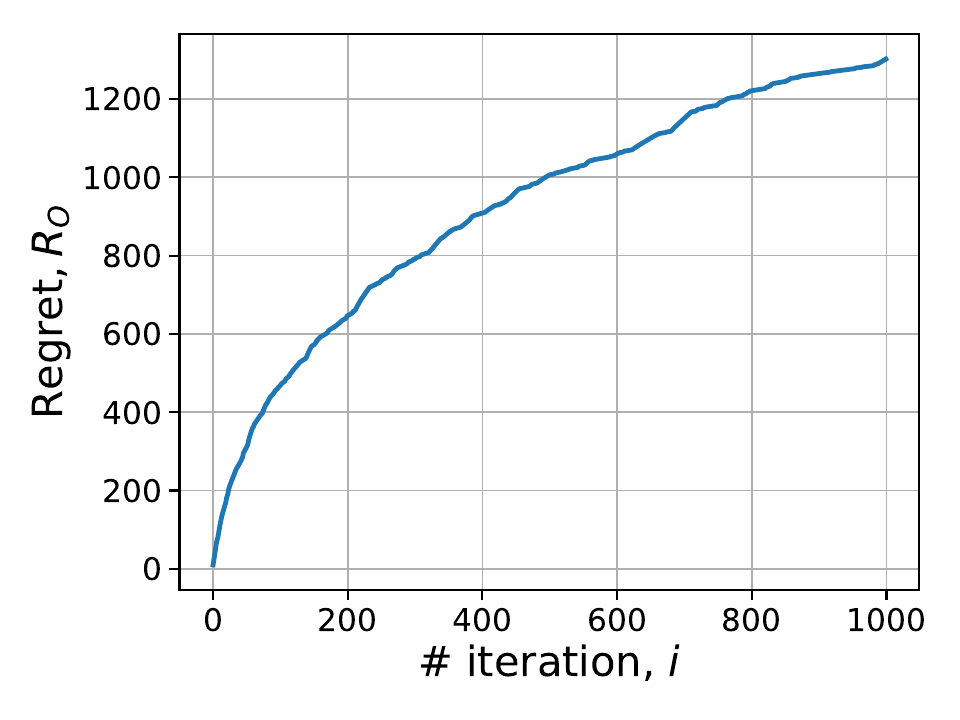}
\caption{Cumulative regret}
\label{fig::cum_regret_nonsmooth}
\end{subfigure}
~
\begin{subfigure}[t]{0.31\linewidth}
\centering
\includegraphics[width=\textwidth]{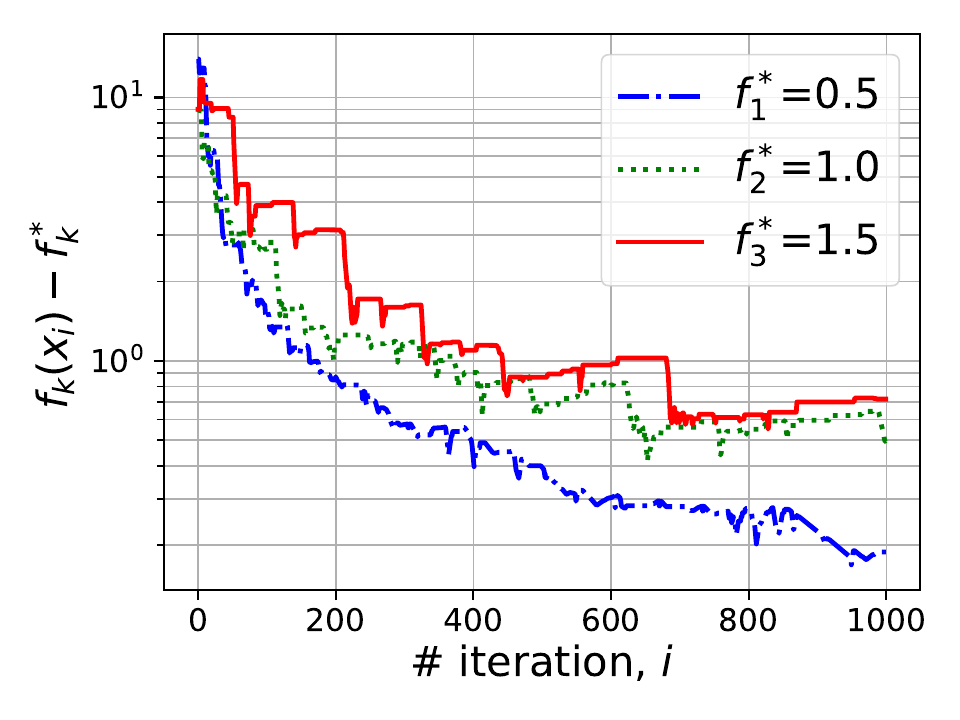}
    \caption{Convergence rate}
\label{fig::convergence_nonsmooth}
\end{subfigure}
~
\begin{subfigure}[t]{0.31\linewidth}
\centering
\includegraphics[width=\textwidth]{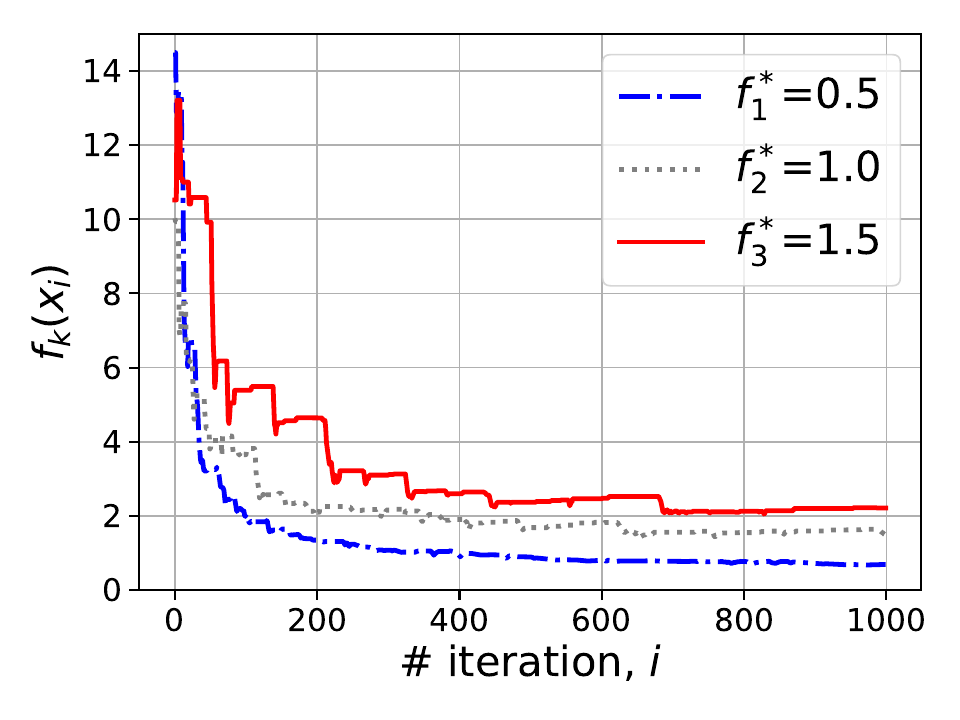}
    \caption{Function values}
\end{subfigure}
\caption{Dependence of cumulative regret (left), convergence rate (middle), and functions values (right) on the number of iterations of \algname{F-LCB} for FMAB setup with nonsmooth convex functions (\ref{exp::nonsmooth_func}). 
The function $f_1$ with the smallest minimal value is identified, and the smallest minimum $f_1^* = 0.5$ is achieved.
Spikes in plots (b) and (c) indicate the switching between the minimization of $\{f_1, f_2, f_3\}$.
}
\label{fig::nonsmooth_convex}

\end{figure}

\subsection{FMAB: nonsmooth convex functions}
To illustrate the performance of our algorithm in the nonsmooth convex setup, we consider piece-wise linear functions with feasible sets $D_i = [-4, 4]^{d}$ and $d=20$:
\begin{equation}
    f_i(\mathbf{x}) = \max_{k=1,\ldots,p} (\mathbf{a}^\top_{ki} \mathbf{x} + b_k^i) + c_i.
    \label{exp::nonsmooth_func}
\end{equation}
We consider $K = 3$ functions and run the algorithm for $T = 1000$ steps.  
We use $p = \{5, 10, 12\}$ linear functions for given minimal values $\{0.5, 1, 1.5\}$ respectively. 
We use the Subgradient Method with Triple Averaging~\cite{nesterov2015quasi} as a base optimizer for such functions.
For this base optimizer, we have $g_i(t) = \frac{M_iR_i}{\sqrt{t}}$. 
Hence, cumulative regret is bounded by $O\left(RM\sqrt{KT}\right)$.
The resulting cumulative regret and function values are presented in Figure~\ref{fig::nonsmooth_convex}.
The convergence of \algname{F-LCB} demonstrates that the minimization process for the target objective function $f_1$ leads to faster convergence to the minimum.



\subsection{FMAB: smooth convex functions with inexact oracle}
To emulate the inexact oracle in the smooth convex setup, we consider functions~(\ref{eq:exp_smooth}) but add noise to the gradients. 
The gradient estimate is computed as $G_i(\mathbf{x}, \xi) = \nabla f_i(\mathbf{x}) + \frac{\sigma_i}{\sqrt{d}}\xi$, where $\xi \sim \mathcal{N}(0, I)$.
We use stochastic accelerated gradient descent as a base optimizer and parameters from proposition~4.5 in~\cite{lan2020first} that give $\mathbb{E}[f(\overline{ \mathbf{x}}_t)] - f^* \leq O(1) \left(2\gamma R + \frac{4\sqrt{2}(M^2 + \sigma^2)}{3\gamma}\right) \frac{1}{\sqrt{t}}$.  
We consider $K = 3$ functions, $T = 1500$ steps, dimension $d=20$ and $\sigma = 2$. 
Figure~\ref{exp:stochastic} shows that although  gradient is inexact, our algorithm finds the best function.
In contract to the deterministic setup, the convergence curves shown in Figure~\ref{fig::convergence_stoch_smooth} are less distinguished.
However, Figure~\ref{fig::fun_vals_stoch_smooth} shows that \algname{F-LCB} pays more attention to the minimization of $f_1$ rather than $f_2$ or $f_3$. 

\begin{figure}[!ht]
\centering
\begin{subfigure}{0.31\linewidth}
\includegraphics[width=\textwidth]{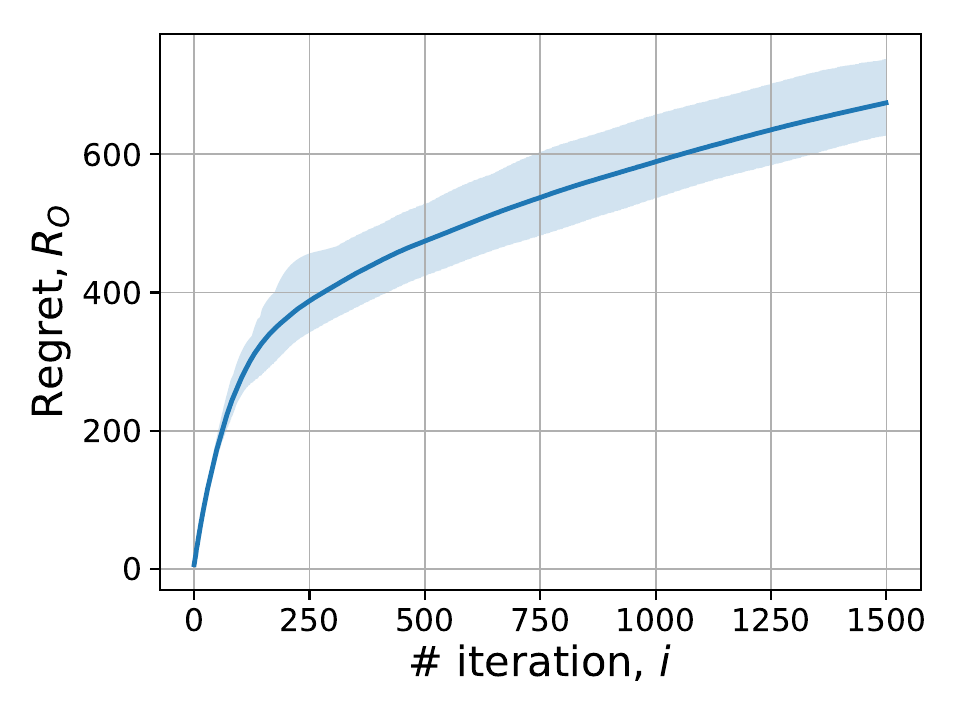}
\caption{Cumulative regret}
\label{fig::cum_regret_stoch_smooth}
\end{subfigure}
~
\begin{subfigure}{0.31\linewidth}
\includegraphics[width=\textwidth]{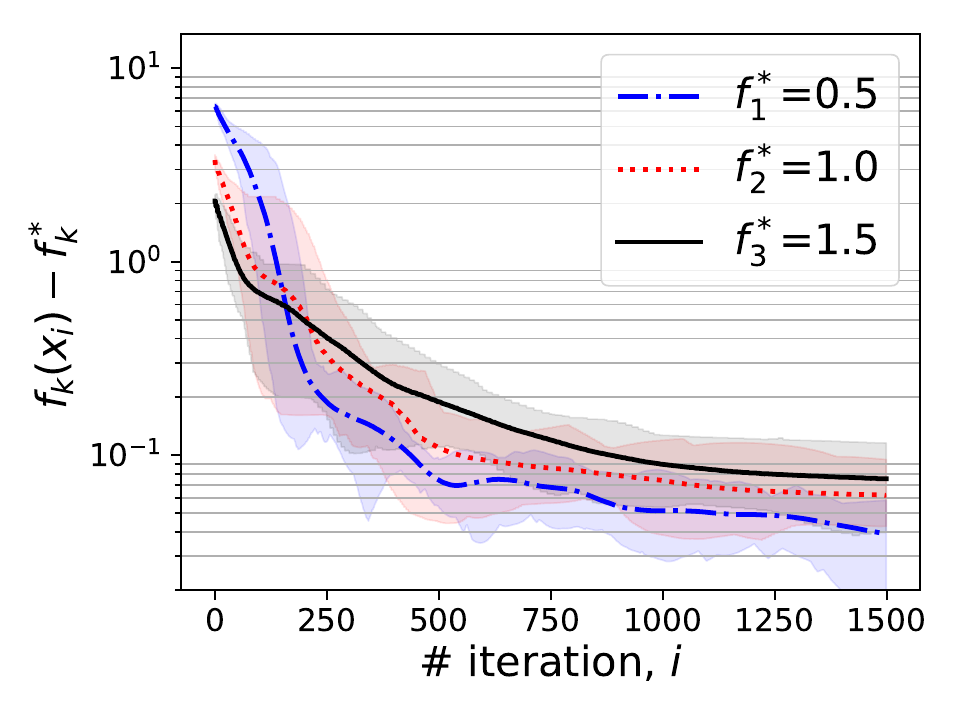}
\caption{Convergence rate}
\label{fig::convergence_stoch_smooth}
\end{subfigure}
~
\begin{subfigure}{0.31\linewidth}
\includegraphics[width=\textwidth]{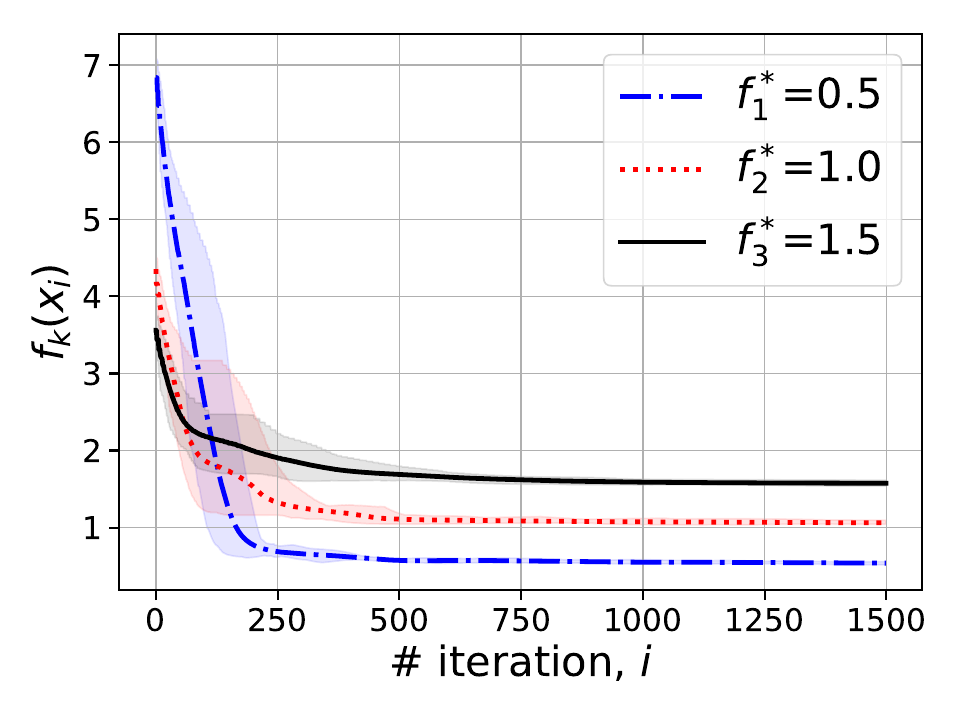}
\caption{Function values}
\label{fig::fun_vals_stoch_smooth}
\end{subfigure}
\caption{Dependence of cumulative regret (left), convergence rate (middle), and functions values (right) on iterations of the \algname{F-LCB} algorithm in the FMAB setup for smooth convex functions with inexact oracles.
The best function is found automatically even if the initial guess is poor.}
\label{exp:stochastic}

\end{figure}

\subsection{BFI: neural network selection}
\label{sec::bfi-cifar100}
\balance
We compare the performance of our algorithm with  \algname{SuccessiveHalving}~\cite{jamieson2016non} (denoted as \algname{SH}) and \algname{Hyperband}~\cite{li2018hyperband} methods.  
We evaluate how well these algorithms can distinguish between neural network architectures with similar performance.
We consider models for the image classification task and train them in the CIFAR-100 dataset~\cite{krizhevsky2009learning} on the single GPU P100.
The selected models have fewer than 5M parameters and are represented as arms in the BFI problem. 
Table~\ref{tab:models} provides a summary of the selected models. 

\paragraph{LCB estimation.}
\label{sec:arm_pull}
In this setup, the pull of the $i$-th arm is 40 updates of the $i$-th model parameters.
After that, the validation loss and accuracy are computed to update the corresponding LCBs.
Since training neural networks is a non-convex problem, convergence guarantees do not directly apply. 
Therefore, to define the function~$g$, we use a heuristic approach inspired by the stochastic optimization theory~\cite{lan2020first}.
In particular, we define $g(t) = \frac{2 \cdot f_i(x^{1, i})}{\sqrt{t}}$, where the nominator estimates the maximum function deviation during the training process, and $x^{1, i}$ is obtained after the first 40 updates.
Since the training process could lead to a large variance in validation losses, we compute the LCBs based on the best validation loss for each model computed before the current step.
More details about the experimental setup and used hyperparameters are in Appendix~\ref{sec::appendix::experiments}.

\begin{table}[!ht]
    \begin{minipage}[c]{0.45\linewidth}
    \caption{Summary of selected neural networks for image clsssification task used to evaluate the \algname{F-LCB}. 
    The models are ranked according to their Top-1 accuracy on the validation set.}
    \resizebox{\linewidth}{!}{
        \begin{tabular}{lcc}
            \toprule
            Model & top-1 acc, \% & $\#$ params, $\cdot 10^6$ \\
            \midrule
            mobilenetv2\_x1\_4 & 75.98 & 4.50 \\
            mobilenetv2\_x1\_0 & 74.20 & 2.35 \\
            shufflenetv2\_x1\_5 & 73.91 & 2.58 \\
            mobilenetv2\_x0\_75 & 73.61 & 1.48 \\
            resnet56 & 72.63 & 0.86 \\
            shufflenetv2\_x1\_0 & 72.39 & 1.36  \\
            resnet44 & 71.63 & 0.67 \\
            mobilenetv2\_x0\_5 & 70.88 & 0.82 \\
            resnet32 & 70.16 & 0.47 \\
            resnet20 & 68.83 & 0.28 \\
            \bottomrule
        \end{tabular}
        }
    \label{tab:models}
\end{minipage}
\hfill
\begin{minipage}[c]{0.45\linewidth}
\captionof{table}{Mean rank of the best model selected by each algorithm. The closer the rank to one, the better the algorithm works.
\algname{F-LCB} algorithm demonstrates strong performance for tested budgets. 
The smallest mean model ranks are bold. \algname{SH} denotes the \algname{SuccessiveHalving} algorithm.
}
\resizebox{\linewidth}{!}{
\begin{tabular}{cccc}
\toprule
Budget, $T$ & \algname{Hyperband} & \algname{SH} & \algname{F-LCB} \\
\midrule
50 & $4.6\pm2.7$ & $2.5\pm0.9$ & $\mathbf{2.2\pm2.7}$ \\
100 & $3.4\pm3.2$ & $2.9\pm0.5$ & $\mathbf{1.1\pm0.3}$ \\
200 & $3.8\pm2.9$ & $1.1\pm0.3$ & $\mathbf{1.1\pm0.3}$ \\
350 & $1.7\pm0.9$ & $1.0\pm0.0$ & $\mathbf{1.0\pm0.0}$ \\
500 & $2.7\pm2.2$ & $1.0\pm0.0$ & $\mathbf{1.0\pm0.0}$ \\
\bottomrule
\end{tabular}
}
\label{exp:alg_compare}
\end{minipage}
\end{table}

\paragraph{Results.}
The experimental comparison of the \algname{F-LCB} algorithm is presented in Table~\ref{exp:alg_compare}, where the mean and standard deviation of the selected model rank are reported.
Each algorithm run is repeated 10 times.
These ranks demonstrate that \algname{F-LCB} can identify the best model using a smaller training budget.
The selected models provide the smallest validation loss for each budget and algorithm. 
In contrast to the competitors, \algname{F-LCB} does not discard models permanently. 
Since \algname{Hyperband} runs \algname{SH} multiple times and splits the budget between them, it shows poor performance.
The parameters for \algname{Hyperband} are chosen so that the total training steps are approximately equal to the given budget.




\section{Conclusion and limitations}
\balance
This work investigates strategies for the functional multi-armed bandit problem (FMAB) and for best function identification (BFI).
We propose a UCB-type algorithm that uses basic optimizers with known large deviation bounds to construct LCB estimates. 
It establishes regret rate guarantees for FMAB and BFI problems that match corresponding lower bounds up to a logarithmic factor.
Extensive experimental evaluation demonstrates the efficiency of the proposed \algname{F-LCB} algorithm.
First, our approach identifies the best functions among smooth and non-smooth functions with base optimizers equipped with deterministic oracles.
Second, we show that \algname{F-LCB} can process the base optimizer with an inexact oracle for smooth objective functions. 
Finally, we compare \algname{F-LCB} with SuccessiveHalving and Hyperband to identify the best neural network for the given task.
Our method outperforms competitors if a small computing budget is available.
In settings with moderate or large budgets, \algname{F-LCB} performs similarly to competitors and identifies the best model for the task. 



\paragraph{Limitations.} 
Our approach requires the function $g(k, \delta)$ for the base optimizer, since $g(k, \delta)$ is directly used for LCB computation.
Therefore, one must know the objective's functional class and convergence rates for the base algorithm in advance. One also needs to observe objective values or their approximations.
This requirement is satisfied for most classical ML models and corresponding training algorithms. 
However, it often fails for NN-based models since there are no convergence rates (in terms of objective value) for algorithms used for NN training aligned with definition~\ref{def::gkdelta-alg}.

\section*{Acknowledgements}
This work was supported by the The Ministry of Economic Development of the Russian Federation in accordance with the subsidy agreement (agreement identifier 000000C313925P4H0002; grant No 139-15-2025-012).


\bibliographystyle{unsrt}
\bibliography{lib}

\newpage
\appendix

\tableofcontents

\section{Clarification on oracle notation}
\label{oracles}
To the best of our knowledge, oracles were first introduced in the seminal work \cite{nemirovskij1983problem}. 
The oracle serves as an information bridge between the problem and the method. The main idea behind oracles is that they allow us to classify optimization methods based on what information from the problem they use. For example, methods that use objective values only are usually referred to as zero-order methods, because they use only zero-order oracle (in \cite{nemirovskij1983problem} notation). Gradient descent, accelerated gradient descent (with GD with momentum) or ADAM are first-order methods, and the Newton method is a second-order method. In short, order is decided by the order of the derivative that the method uses when iterates. There also exist methods with other types of oracles (such as separation oracle). 

The main reason is to factorize the complexity of any algorithm in the following form: $C_{\mathcal{A}} (\varepsilon, \mathcal{P}) = I_{\mathcal{A}} (\varepsilon, \mathcal{P})\cdot O_{\mathcal{A}}(\mathcal{P})    $, where $C_{\mathcal{A}} (\varepsilon)$ is the overall complexity (time or atomic operations) needed for the algorithm $\mathcal{A}$ to achieve $\varepsilon$-optimal point for the problem $\mathcal{P}$; $I_{\mathcal{A}} (\varepsilon, \mathcal{P})$ is the number of iterations the algorithm $\mathcal{A}$ would require, $O_{\mathcal{A}}(\mathcal{P})$ is the oracle complexity for one iteration of the algorithm $\mathcal{A}$. This captures the idea that methods that use more information about the problem could perform better, but the price is that information is harder to get or compute. For example, gradient-type methods are in general more efficient in terms of the number of iterations needed to find an approximate solution than zero-order methods, but each iteration is much more expensive because the gradient must be computed and stored. 

An immediate take-off is that this notation allows us to reason about methods' optimality. If two methods use the same oracle type, but one of them requires fewer iterations, then it is better. This leads to the so-called oracle complexity and lower bound for classes of methods defined by the oracle type they use. This is a standard approach to complexity in nonlinear optimization: any lower and upper bounds are provided for a pair of problem class and method class, and the method class is defined by oracle type. We use this notation to achieve lower bounds in the next sections.

To make it more familiar, here is an example of how the gradient descent method could be described via oracle notation:

For the problem
$$
\min_x f(x)
$$
first-order oracle $\mathcal{O}(x)$ contains objective value at the point $x_t$ and its gradient $\mathcal{O}(x_t) = \{f(x_t), \nabla f(x_t)\}$. 

Hence, when gradient descent is running, at each iteration $t$ it first calls a routine that computes the gradient at the current query point $x_t$, i.e., calls for the oracle $\mathcal{O}(x_t)$, then takes the gradient and makes an update step 
$$
x_{t+1} = \mathcal{A}(x_1, \mathcal{O}(x_1), \dots, x_t, \mathcal{O}(x_t)) = x_t - \mu_t \nabla f(x_t).
$$
On the other hand, the Newton method could not be defined via first-order oracles, it require second-order derivative.

\section{Additional proofs for regret rates.}
\subsection{Stochastic FMAB}
\label{sec::appendix::fmab_proof}

\textbf{Theorem 3} 
Assume $\mathcal{A}_i$ be $g_i(k, \delta)$-bounded algorithms for the corresponding problem $\min_{x\in \mathcal{D}_i} f_i(x)$ for each $1\leq i \leq K$. 
Then for Algorithm~\ref{alg:f_ucb} the following inequality holds:
\begin{equation}
\mathbb{E}\left[R_O(T)\right] \leq \mathbb{E}\left[ \sum_{i=1}^{K} \sum_{t=1}^{k_{i, T}} g_i(t ,\delta) \Bigg\vert \mathcal{E}_{\text{clean}} \right] + \delta KT^2 \cdot A,
\label{eq:regret_bound:app}
\end{equation}
where $A = \max_{1 \leq i \leq K} \max_{x_i\in D_i} f_i(x_i)$.

\begin{proof}
Let's consider clean event $\mathcal{E}_{clean}$~\ref{def::clean_event}. 
Then, according to inequality~(\ref{eq::exp_regret}), expected regret~(\ref{eq:regret_bound:app}) under a complement to a clean event is lower than $\delta KT^2 A$.   
In the conditions of an event from a set of clean events, for all realizations of the optimization process, the concentration inequalities hold for each arm at all times. So, we may bound regret as in Theorem~\ref{lemma:o_regret_bound}. In the following, the proof is presented for completeness. 
Let $i_t$ be the arm selected at time $t$. 
Then, its LCB value is the smallest one among the arms. 
That is, for all $j$:
\begin{equation}
LCB_{i_t}(k_{i_t, t}) =  f_{i_t}(x^{i_t, k_{i_t, t}}) - g_{i_t}( k_{i_t, t}, \delta) \leq f_j(x^{j, k_{j, t}}) - g_j(k_{j, t}, \delta) \leq f_j^*.
\end{equation}

In particular, this holds w.r.t. the best arm, yielding an estimate of the per-step regret:
\begin{equation}
    f_{i_t}(x^{i_t, k_{i_t, t}}) -g_{i_t}(k_{i_t, t}, \delta) \leq f^* \Rightarrow f_{i_t}(x^{i_t, k_{i_t, t}}) - f^* \overset{(\star)}{\leq} g_{i_t}(k_{i_t, t}, \delta).
\end{equation}

Summing up inequality ($\star$) for $t=1, \ldots, \tau$ we get regret rate:
\begin{equation}
    \sum^{\tau}_{t = 1}f_{i_t}(x^{i_t, k_{i_t}}) - f^* \overset{(1)}{\leq} \sum^{\tau}_{t = 1} g_{i_t}(k_{i_t}, \delta) \overset{(2)}{=} \sum_{i=1}^{K} \sum_{t= 1}^{k_{i,\tau}} g_i(t, \delta).
\end{equation}
Equality (2) is obtained by grouping terms over arms.
\end{proof}

\subsection{Deterministic case for particular class functions}
\label{sec::proof_det_table}
In the following, we present the proof of regrets \ref{tab:summary} $R_O$ and $R_B$ rates for BFI and FMAB problems in the deterministic setup. $g(k, \delta)$ bounds and algorithms for deterministic case taken from \cite{bubeck2015convex}.

Denote by $\lesssim$ inequality up to numerical factors. For \textbf{\textit{Convex $M$-Lipschitz functions}} we use the Projected Gradient Descent (PGD) algorithm, which is $g(k) = \frac{RM}{\sqrt{k}}$ bounded. 

\textit{BFI}. According to Theorem~\ref{th:bfi_det_bound}, to achieve regret $R_B$ bounded by $\varepsilon$, Algorithm~\ref{alg:f_ucb} requires at most\\ 
\begin{equation*}
T = 1 + \sum\limits_{i=1}^K g_i^{-1}\left(\max\left[f_i^*-f^* - \frac{\varepsilon}{2}, \frac{\varepsilon}{2}\right]\right)
\stackrel{g_i^{-1}(x) = \frac{M_i^2R_i^2}{x^2}}{\lesssim }\sum\limits_{i=1}^K\Bigl \lceil\frac{M_i^2R_i^2}{\max(f_i^*-f^*-\frac{\varepsilon}{2},\frac{\varepsilon}{2})^2}\Bigr \rceil
\end{equation*}

steps. For other cases, the proof follows a similar approach, with the corresponding $g(k)$.

\textit{FMAB}. To bound regret $R_O(T)$, first use Theorem~\ref{lemma:o_regret_bound}. First bound the internal sum $\sum_{t =1}^{k_i}\frac{1} {\sqrt{t}} \lesssim \sqrt{k_i}$. 
Then regret is bounded by using Hölder's inequality:
\begin{equation}
    R_O(T) \lesssim\sum_{i = 1}^K M_iR_i\sqrt{k_i} \leq  O\left( \sqrt{T\cdot \sum_{i=1}^KM_i^2R_i^2 } \right).
\end{equation}

\textbf{\textit{Convex $L$-smooth functions}}. Accelerated Gradient Descent is a $g(k) = \frac{LR^2}{k^2}$ bounded algorithm. 

\textit{BFI}. 
\begin{equation*}
T = 1 + \sum\limits_{i=1}^K g_i^{-1}\left(\max\left[f_i^*-f^* - \frac{\varepsilon}{2}, \frac{\varepsilon}{2}\right]\right) 
\stackrel{g_i^{-1}(x) = \sqrt{\frac{L_i R_i^2}{x}}}{\lesssim }\sum\limits_{i=1}^K\left\lceil \sqrt{\frac{L_iR_i^2} {\max(f_i^*-f^*-\frac{\varepsilon}{2}, \frac{\varepsilon}{2})}} \; \right\rceil.
\end{equation*}

\textit{FMAB}. Again summing up the intermal sum we get $\sum_{t =1}^{k_i}\frac{1} {t^2} \lesssim \frac{1}{k_i}$ we obtain regret bounds:
\begin{equation}
    R_O(T) \lesssim\sum_{i = 1}^K L_iR_i^2\frac{1}{k_i} \leq \min\left(TLR^2, \sum_i^KL_iR_i^2\right).    
    \end{equation}

\textbf{\textit{$\mu$-strongly convex $M$ -Lipschitz functions}}. 
Here we use PGD, which is $g(k) = \frac{M^2}{\mu k}$ bounded algorithm. 

\textit{BFI}.
\begin{equation*}
T = 1 + \sum\limits_{i=1}^K g_i^{-1}\left(\max\left[f_i^*-f^* - \frac{\varepsilon}{2}, \frac{\varepsilon}{2}\right]\right) 
\stackrel{g_i^{-1}(x) = \frac{M_i^2}{\mu_i x}}{\lesssim }\sum\limits_{i=1}^K\Bigl \lceil\frac{M_i^2}{\mu_i \max(f_i^*-f^*-\frac{\varepsilon}{2}, \frac{\varepsilon}{2})}\Bigr \rceil.
\end{equation*}

\textit{FMAB.} We obtain upper bound for internal sum as $\sum_{t =1}^{k_i}\frac{1} {t} \lesssim \log {k_i}$. 
Hence, regret is bounded in following ways:
\begin{align*}
    R_O(T) \lesssim\sum_{i = 1}^K \frac{M_i^2}{\mu_i}\log k_i \leq  \left(\sum_{i=1}^K \frac{M_i^2}{\mu_i}\right) \log T\\
    R_O(T) \lesssim\sum_{i = 1}^K \frac{M_i^2}{\mu_i}\log k_i \leq \left(\frac{M^2}{\mu} K\log{\frac{T}{K}}\right).
    \end{align*}
    
For \textbf{\textit{ $\mu$-strongly convex $L$-smooth functions}} we use AGD, which is $g(k) = R^2 \exp\{-\frac{k}{\sqrt{\kappa}}\}$ bounded. 

\textit{BFI}.

\begin{equation*}
T = 1 + \sum\limits_{i=1}^K g_i^{-1}\left(\max\left[f_i^*-f^* - \frac{\varepsilon}{2}, \frac{\varepsilon}{2}\right]\right) 
\stackrel{g_i^{-1}(x) = \sqrt{\kappa} \log \frac{R_i^2}{x}}{\lesssim } \sum\limits_{i=1}^K\Bigl \lceil\sqrt{\kappa}\log\left(\frac{R_i^2}{\max(f_i^*-f^*-\frac{\varepsilon}{2}, \frac{\varepsilon}{2})}\right)\Bigr \rceil.
\end{equation*}

\textit{FMAB}. Using the fact, that internal sums are geometric progressions we bound them with $\frac{\exp \left( -\frac{1}{\sqrt{\kappa_i}}\right)}{\exp \left( -\frac{1}{\sqrt{\kappa_i}}\right) - 1} = \frac{1}{\exp \left( \frac{1}{\sqrt{\kappa_i}}\right) - 1}$. Then the bound for $R_O(T)$ follows:

\begin{equation}
    R_O(T) \lesssim\sum_{i = 1}^K\frac{R_i^2}{\exp \left( \frac{1}{\sqrt{\kappa_i}}\right) - 1} .
\end{equation}

\subsection{Stochastic case for particular class functions}
\label{sec::appendix::stoch_fmab}

\textit{$L$-smooth} functions. 
Under assumption~\ref{assumption::noise_bound} and from Theorem 3.2 in~\cite{sadiev2023high} follows that algorithm \algname{clipped-SSTM} 
\begin{equation*}
g(k,\delta) = \max\left({\frac{LR^2}{k^2}}, \frac{\sigma R}{k^{1- \frac{1}{\alpha}}}\right )\log\frac{1}{\delta}
\end{equation*}
bounded. 
Based on this convergence rate, we obtain a regret bound $R_O(T) \leq O(\max(KLR^2, \alpha \sigma R K^{1 - \frac{1}{\alpha}} T^{\frac{1}{\alpha}}\log(AKT))$, where $\alpha \in (1,2]$.

\textit{$\mu$-strongly convex, $L$-smooth} functions. 
Under assumption~\ref{assumption::noise_bound}, we use \algname{R-clipped-SSTM} algorithm. From Theorem 3.2 in~\cite{sadiev2023high} it is $g(k, \delta) = \max\left(\exp\{-\frac{k}{\sqrt{\kappa}}\}, \frac{\sigma^2}{\mu}k^{-\frac{2(\alpha-1)}{\alpha}} \right)\log(\frac{1}{\delta})$ bounfrf. We obtain $R_O(T) \leq O(\max K\sqrt{\frac{L}{\mu}} ,\left({\frac{\sigma^2}{\mu} K^{2\frac{\alpha -1}{\alpha}} T^{\frac{2}{\alpha} -1}}\log(AKT)\right))$ from Theorem 3.2 in~\cite{sadiev2023high}.

\textit{$M$-Lipschitz, $\mu$-strongly convex, nonsmooth} functions.
Since functions may not be smooth, we need more restrictive assumption~\ref{assumption::noise_bound} ($\alpha=2$) and assumption~~\ref{assumption::noise_tails}.
The base optimizer for this setup is the stochastic AGD algorithm from section 4 in~\cite{lan2020first}. 
From proposition 4.6, we get:
\begin{equation*}
    \mathbb P\left[ f(x^t) - f^* \geq \frac{4LR}{t(t+1)} + \frac{4(M^2 + \sigma^2)}{\mu(t + 1)} + \left(\frac{2\sigma R}{\sqrt{3t}} + \frac{4\sigma^2}{\mu t}\right) \log \left(\frac{1}{\delta}\right)\right] \leq 2\delta.
\end{equation*}
Hence for $\delta = \frac{1}{AKT^{2}}$ the following regret bound holds:
\begin{equation}
R_O(T) \leq O\left( \sqrt{KT}\sigma R \log{(AKT)} \right).
\end{equation}

\section{Experimental evaluation details}
\label{sec::appendix::experiments}

\subsection{CIFAR-100 dataset}

We take the models from the repository \url{https://anonymous.4open.science/r/FMAB/README.md} for the experiment with the CIFAR-100 dataset.
The complete description of the selected models is presented in Table~\ref{tab:models_complete}.

\begin{table}[!ht]
        \caption{Summary of selected neural network models for the CIFAR-100 experiment. The models are ranked according to their Top-1 accuracy on the validation set.}
    \centering
    \resizebox{\linewidth}{!}{
        \begin{tabular}{rlrrrr}
            \toprule
            Rank & Model & Top-1 Acc.(\%) & Top-5 Acc.(\%) & $\#$Params.(M) & $\#$MAdds(M) \\
            \midrule
            1 & mobilenetv2\_x1\_4 & 75.98 & 93.44 & 4.50 & 170.23 \\
            2 & mobilenetv2\_x1\_0 & 74.20 & 92.82 & 2.35 & 88.09 \\
            3 & shufflenetv2\_x1\_5 & 73.91 & 92.13 & 2.58 & 94.35 \\
            4 & mobilenetv2\_x0\_75 & 73.61 & 92.61 & 1.48 & 59.43 \\
            5 & resnet56 & 72.63 & 91.94 & 0.86 & 125.75 \\
            6 & shufflenetv2\_x1\_0 & 72.39 & 91.46 & 1.36 & 45.09 \\
            7 & resnet44 & 71.63 & 91.58 & 0.67 & 97.44 \\
            8 & mobilenetv2\_x0\_5 & 70.88 & 91.72 & 0.82 & 28.08 \\
            9 & resnet32 & 70.16 & 90.89 & 0.47 & 69.13 \\
            10 & resnet20 & 68.83 & 91.01 & 0.28 & 40.82 \\
            11 & shufflenetv2\_x0\_5 & 67.82 & 89.93 & 0.44 & 10.99 \\
            \bottomrule
        \end{tabular}
        }
    \label{tab:models_complete}
\end{table}

\paragraph{Training hyperparameters.}
We use a batch size of 64 and the Adam optimizer~\cite{kingma2014adam} with a learning rate of $10^{-4}$. 
Each arm pull consists of $40$ optimization steps. 
The budget for \algname{F-LCB} is $T = 200$.
Each model was trained for 100 epochs. 
We split the entire dataset into three parts: train (45000 samples), validation (5000 samples), and test (10000 samples) sets.

\subsection{BFI: neural networks for solving image classification problem}
This section considers an application of the proposed \algname{F-LCB} algorithm to the training neural network models on the CIFAR10 dataset~\cite{krizhevsky2009learning}. 
The main challenge in applying neural networks to solve practical problems is the choice of the proper architecture for the given memory budget of available GPUs.
In our experiment, we consider five different models that have almost the same number of parameters ($\approx 11.5$M) but structure them in a completely different manner. 
In particular, we select the following models: \texttt{ResNet18}~\cite{he2016deep}, convolution neural network similar to VGG~\cite{simonyan2014very}, which we refer as \texttt{VGG}, shallow MLP with only two large linear layers, which we denote as \texttt{ShallowMLP} and deep MLP with eight linear layers of moderate size, which we denote as \texttt{DeepMLP}.
The number of trainable parameters in the considered models is presented in Table~\ref{exp::model_parameters}.
Note that \texttt{DeepMLPNorm} has the same structure as \texttt{DeepMLP} except the intermediate normalization layers added to improve stability of the gradient propagation.
More detailed description of the considered model can be find in the source code.
We expect that our \algname{F-LCB} algorithm identifies the best neural network architecture to solve image classification task.

\begin{table}[!ht]
\centering
\caption{Number of trainable parameters in the considered candidate neural networks.}
\begin{tabular}{lc}
\toprule
 Model name      & \# parameters \\
\midrule
 \texttt{ShallowMLP}  & $12.64$M     \\
 \texttt{DeepMLP}     & $11.55$M    \\
 \texttt{DeepMLPNorm} & $11.57$M   \\
 \texttt{VGG}     & $11.47$M   \\
 \texttt{ResNet18}    & $11.17$M     \\
\bottomrule
\end{tabular}
\label{exp::model_parameters}
\end{table}

\paragraph{Training hyperparameters.}
We use a batch size of 64 and the Adam optimizer~\cite{kingma2014adam} with a learning rate of $10^{-4}$. 
Each arm pull consists of $40$ optimization steps. 
The budget for \algname{F-LCB} is $T = 200$.
We split the entire dataset into three parts: train (45000 samples), validation (5000 samples), and test (10000 samples) sets.
The LCB estimation procedure is the same as discussed in the main text in Section~\ref{sec::bfi-cifar100}.

\paragraph{Best model identification results.}

We run the proposed algorithm 10 times and report the averaged test loss and test accuracy of the models in Figure~\ref{exp:cv_models}.
In addition, we show the $[0.1, 0.9]$ quantile confidence interval via the shaded area.
Figure~\ref{exp:cv_models} demonstrates that the \algname{F-LCB} algorithm confidently identifies the two best models.
Moreover, the identified models are \texttt{ResNet18}, and \texttt{VGG} which are convolutional neural networks and therefore more efficient in solving the considered image classification problem~\cite{goodfellow2016deep}.
In addition, note that \texttt{ResNet18} dominates \texttt{VGG} during the first 150 iterations of our algorithm. 
Thus, we confirm that the proposed framework is relevant for the online identification of the best neural network for a particular dataset.

We compare our approach with the na\"ive baseline based on the early stopping technique.
The early stopping technique stops training if the model's test accuracy does not improve by more than $1\%$ over the consequent $5$ epochs.
We subsequently train the considered neural networks with the early stopping technique and identify the best model based on the best test accuracy.
This baseline identifies the \texttt{ResNet18} as the best model, too.
However, it requires $19 \pm 0.8$ minutes averaged for five runs.
At the same time, the proposed \algname{F-LCB} algorithm identifies the best model only for $9 \pm 0.5$ minutes averaged over the same number of runs for 200 pulls, which is enough in our case to estimate best arms.
In addition, beyond accelerating model selection, our approach monitors the model training online and provides real-time insights into confidence intervals for models' learning curves.

\begin{figure}[!ht]
\centering
\includegraphics[width=\linewidth]{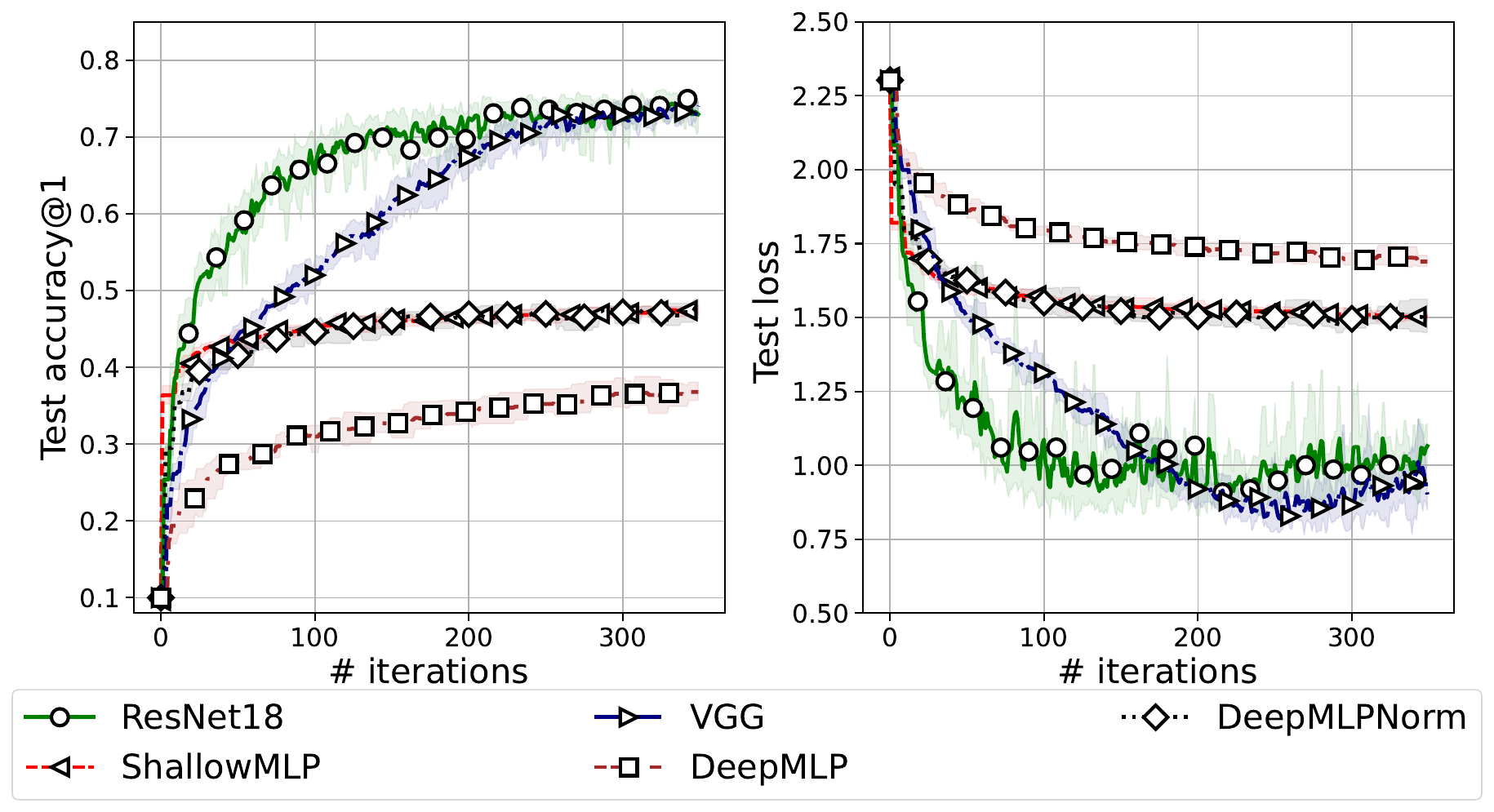}
\caption{Test accuracy and loss of the considered models. 
The shaded area shows $[0.1; 0.9]$ quantiles after running for 10 times. \texttt{ResNet18} and \texttt{VGG} models are the most efficient for solving the considered task that coincides with the previous studies. 
}
\label{exp:cv_models}

\end{figure}

\section{Reduction from MAB to FMAB}

Let us consider a stochastic multi-armed problem setup with time horizon $T$ and $K$ arms, each arm $i$ is equipped with an unknown distribution $\mathcal{D}_i$ with an expected value $\mu_i>0$. At each time step $t$ the agent chooses arm $i_t$ and observes reward $\mu_{i_t}+\xi_{t}$ sampled from $\mathcal{D}_{i_t}$ (noise $\xi_t$, by construction, is unbiased $\mathrm{E}[\xi_t] = 0$). How can we model this problem as FMAB?

Consider FMAB setting with $f_i(x) = \frac{1}{2}(x-\mu_i)^2-\frac{\mu_i^2}{2}$. Here $x$ serves as an estimation of expected value $\mu_i$.

Note that this setting achieves a few good properties:
\begin{itemize}
    \item Optimal objective values represent arms: $f_i^* = -\frac{\mu_i^2}{2}$,
    \item Objective value $f_i(x_t) = \frac{x^2_t}{2} - x_t \mu_i$ can be estimated via tractable $\hat{f}_i(x_t) = \frac{x_t^2}{2} - x_t(\mu_i+\xi_t)$ ($\mu_i+\xi_t$ is a sampled reward from $\mathcal{D}_i$). Estimation is unbiased $\mathrm{E}[\hat{f}_i(x) ] = f_i(x)$.
    \item First order stochastic oracle $\mathcal{O}_i(x_t) = x_t - \mu_i - \xi_t$ is available and provide unbiased gradient estimation $\mathrm{E}[\mathcal{O}_i(x_t)] = x_t - \mu_i = \nabla f_i(x)$.
\end{itemize}

We evaluate reduction method on a heavy-tailed stochastic multi-armed bandit problem (see the full setup in \cite{dorn2024fast}). The instance consists of $K = 10$ arms indexed by  $i \in \{0, 1,\dots,9\}$. The true mean reward of arm $i$ is $\mu_i = \frac{i}{10}$, so the arms are ordered from the worst $\mu_0 = 0$ to the best $\mu_9 = 0.9$. At each pull the observed reward $\mu_{i_t}+\xi_{t}$ is corrupted by heavy tailed standard Cauchy noise $\xi_t$ with the CDF $p(x) = \frac{1}{\pi(1 + x^2)}$. Our algorithm is a slightly modified versionof \algname{SGD-UCB-SMoM} from \cite{dorn2024fast}. The only change is the Upper Confidence bound computation way. According to analysis above and Theorem 8 from \cite{dorn2024fast} we set $UCB(i, n_{i, t}, \delta) := [\frac{x_{i,t}^2}{2} - x_{i,t}(\mu_i+\xi_t)] + \frac{C}{\sqrt{n_{i, t}}}$, where $n_{i, t}$ is the number of the times arm i has been selected, $x_{i,t}$ is a parameters on arm $i$ at time $t$ and C is a constant, that determines the convergence speed of optimizer on arm. Compared algorithms are \algname{SGD-UCB-SMoM} -- the original algorithm from \cite{dorn2024fast}. \algname{SGD-UCB-SMoM-\textbf{F}} is our \textbf{F}unctional version. \algname{RUCB-Median} -- the Robust UCB algorithm of \cite{bubeck2013bandits} that uses median of means to estimate rewards. Figure  \ref{fig::mab_experiment} displays the cumulative regret and mean regret for the three methods. Algorithms are runned for $T = 25000$ steps, with the curves representing the mean of 30 trials. The shaded area around the curves indicates the standard deviation. As expected, the functional version (\algname{SGD‑UCB‑SMoM‑F})  achieves a regret rates comparable to that of \algname{SGD‑UCB‑SMoM}.

\begin{figure}[!ht]
\centering
\begin{subfigure}{0.31\linewidth}
\includegraphics[width=\textwidth]{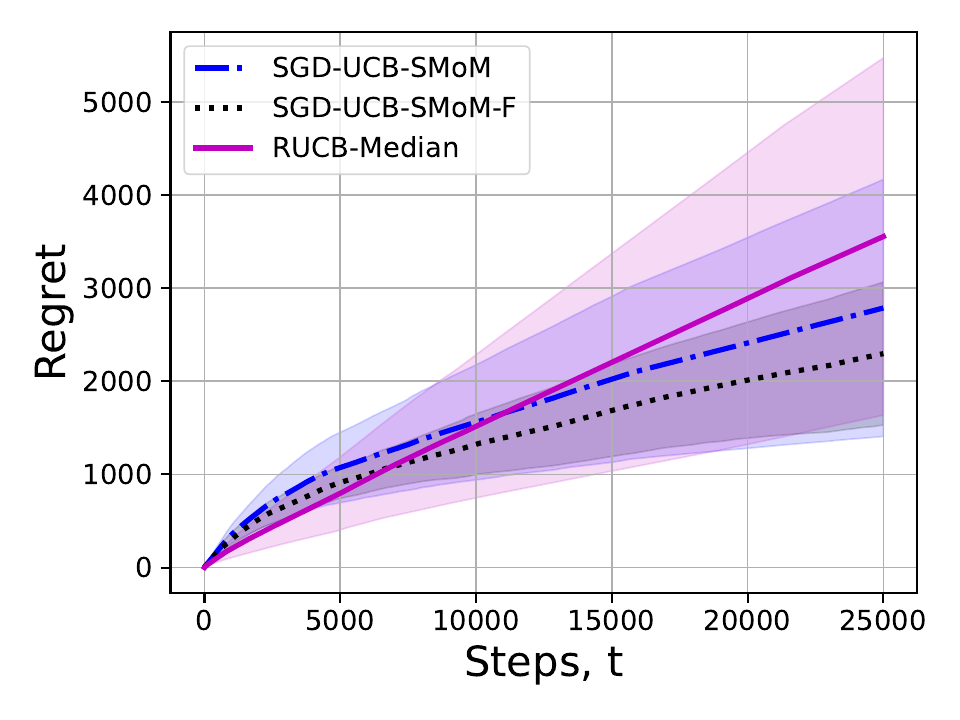}
\caption{Cumulative regret}
\end{subfigure}
~
\begin{subfigure}{0.31\linewidth}
\includegraphics[width=\textwidth]{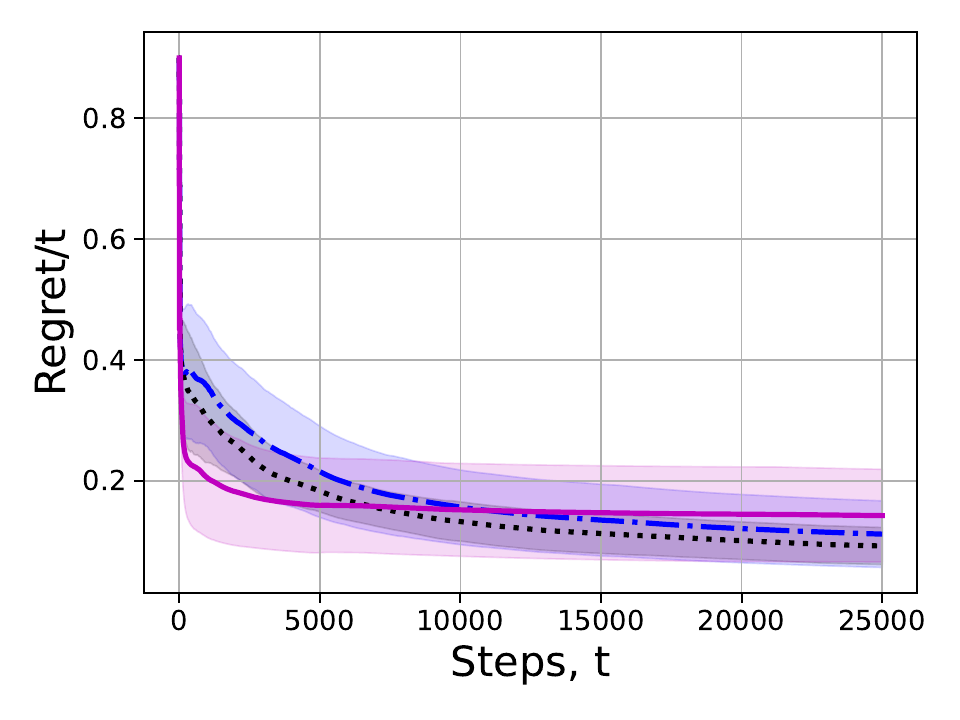}
\caption{mean regret}
\end{subfigure}
~
\caption{Dependence of cumulative regret (left) and mean regret (right) for the considered algorithms in the multi-armed bandit (MAB) setting with Cauchy noise. The values are averaged over 30 trials, with the shaded regions representing the corresponding standard deviation. The graphs indicate that the proposed functional version shows comparable regret to the original algorithm.}
\label{fig::mab_experiment}

\end{figure}

\section{Lower bounds for deterministic setups}\label{sec: Lower bounds}

Next we show that lower bounds for BFI problem setting could be derived from lower bounds for related optimization problems. 
We establish them for deterministic algorithms.
Next we proceed with a general reduction scheme to obtain lower bounds for our problem based on lower bounds for respective optimization problems. 

Denote by $\mathcal{P}_i$ ($1\leq i \leq k$) optimization problem 
$$
\min_{x\in \mathcal{X}_i} f_i(x),
$$
and by $f_i^*$ the value of the objective at the solution. For simplicity reasons, we assume that objective functions are from the same class $f_1, \dots, f_k \in \mathcal{F}$ and that each function is equipped with the oracle from the same class and feasibility sets $\mathcal{X}_i$ are nonempty convex sets with the same difficulty for oracles. The complexity of each $\mathcal{P}_i$ (in terms of~\cite{nemirovskij1983problem}) is the same, i.e. each problem has the same lower bound. To make it more precise, there exists a function $g(H, t)$ such that for each algorithm $\mathcal{A}$ and resulting  sequence of feasible test points $\{x_t\}_{t=1}^{\infty}$ defined as $x_{t+1} = \mathcal{A}\left (x_1, \mathcal{O}(x_1), \dots, x_t, \mathcal{O}(x_t) \right)$ there exists a problem instance $\mathcal{P}$ defined by parameters $H = (f, \mathcal{X})$ and oracle~$\mathcal{O}$ the following inequation holds:
\begin{equation}\label{lower_bound}
    f(x_t) - \min_{x\in \mathcal{X}} f(x) \geq g(H, t).
\end{equation}

Function $g(H, t)$ is much easier to comprehend if it can be factorized as $g(H, t) = \phi(H) \cdot t^{-\alpha}$ and encapsulated as $\underline g(t) = \inf_{H} g(H, t)$ and then used $\Omega (t^{-\alpha})$ notation instead. For example, in the case of unconstrained minimizing a smooth convex objective equipped with a first-order oracle, in the seminal work \cite{nemirovskij1983problem} shows that optimal rates are $t^{-2}$. We will call lower bound \textit{sharp and uniform} for an algorithm family $\mathcal{W} = \{\mathcal{A}\}$ and problem family $\mathcal{S} = \{\mathcal{P}\}$ such that for any algorithm $\mathcal{A}$ from $\mathcal{W}$ there exists a problem $\mathcal{P}$ such that the following equality holds $f(x_t) - \min_{x\in \mathcal{X}} f(x) \geq g(H_{\mathcal{P}}, t)$, where $x_{t+1} = \mathcal{A}(x_1, \mathcal{O}(x_1), \dots, x_t, \mathcal{O}(x_t))$, for each time $t>0$.

We also need to introduce the concept of optimal algorithm, i.e. the algorithm $\mathcal{A}^*$ with convergence rate same as lower bound up to constant: 
\begin{equation}
    f(x_t) - \min_{x\in \mathcal{X}} f(x) \leq C_{\mathcal{A}^*} g(H, t),
\end{equation}
where $C_{\mathcal{A}^*}>0$ is some known constant.

For the problem of unconstrained minimization of smooth convex objectives equipped with a first-order oracle, the renowned Nesterov's accelerated gradient descent (AGD) \cite{nesterov1983method} is optimal. Actually, monotone version of AGD is needed, since we introduce \textit{uniform} lower bound, that holds for each time step $t$.


Assuming that the agent has access to the optimal algorithm $\mathcal{A}$ for the problem class of problems $\mathcal{P}_1, \dots, \mathcal{P}_k$ with exact lower bound defined by the function $g(H, t)$, what is the lower bound for the optimal arm identification problem?

More formally, what number of testing points $T$ is needed to certify that $R(T) \leq \varepsilon$?

Let $\mathcal{P}_1$ and $\mathcal{P}_2$  be two problem instances from the same class with optimal values $f_1^*$ and $f_2^*$ respectively. 

Denote by $t_i(\varepsilon) = t(\varepsilon, H_i) = \{\min t \mid g(H_i, t) \leq \varepsilon \}$ the \textit{vicinity hitting time} for the problem defined by $\mathcal{H}_i$ and tolerance level $\varepsilon$. In case of sharp lower bounds we assume that for each algorithm $\mathcal{A}$ there exists \textit{hard} problem $\mathcal{P}$ such that if $t > t_i(\varepsilon)$, then $f(x_t) - f^* < \varepsilon$.

Let $\mathcal{A}$ be an algorithm supported by the oracle $\mathcal{O}(x)$ and the sequence of points it generates $\{x_t\}_{t=1}^{\infty}$ such that 
\begin{equation*}
x_{t+1} = \mathcal{A}\left (x_1, \mathcal{O}(x_1), \dots, x_t, \mathcal{O}(x_t) \right) \quad  \text{for all}~ t.
\end{equation*}

Denote by $\mathcal{L} (\mathcal{G}) = \{\mathcal{L}_{t}(\mathcal{G})\}_{t=1}^{\infty}$ the sequence of problem sets, such that 
\begin{equation*}
\mathcal{L}_{t}(\mathcal{G}) = \{\mathcal{P} \quad | \quad x_s^{\mathcal{P}} = x_s^{\mathcal{G}}, \quad  \mathcal{O}_{\mathcal{G}}(x_s^{\mathcal{G}}) = \mathcal{O}_{\mathcal{P}}(x_s^{\mathcal{P}}), \quad 1\leq s \}
\end{equation*}

, i.e. $\mathcal{L}_{t}$ is a set of problems that could not be distinguished by any algorithm $\mathcal{A}$ with supporting oracle~$\mathcal{O}$ until step $t$.

Let $\mathcal{P}_1$ and $\mathcal{P}_2$  be two problem instances from the same class. 

\begin{lemma}
Consider the best function identification problem consisting of two  problems from a given class with uniform sharp lower bound function $g(H, t)$. Then achieving regret $R_B(T) \leq \varepsilon$ requires at least
$$
T \geq  2 \cdot t(\varepsilon)
$$
testing points.
\end{lemma}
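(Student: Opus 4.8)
The plan is to use an indistinguishability (adversary) argument. First I would fix an arbitrary BFI algorithm $\mathcal{A}$ together with its supporting oracles, and consider the two problems $\mathcal{P}_1$ and $\mathcal{P}_2$ together with the sequence of query points and oracle answers they generate. The key idea is that, by the sharpness and uniformity of the lower bound $g(H,t)$, for each problem there is a \emph{hard} instance on which the optimization error stays above $\varepsilon$ until at least step $t(\varepsilon)$, and moreover the oracle responses on that hard instance can be made to coincide — up to time $t(\varepsilon)-1$ — with the responses of a reference instance whose optimum the algorithm does not yet know. Thus, before $t(\varepsilon)$ calls have been spent on arm $i$, the algorithm cannot certify that $f_i^\ast$ is within $\varepsilon$ of its true value, nor distinguish between an instance where $f_i^\ast$ is very small and one where it is not.

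Next I would argue by contradiction: suppose the algorithm stops after $T < 2 t(\varepsilon)$ rounds and outputs an index $J_T \in \{1,2\}$. Since the total budget is split between the two arms, at least one arm — say arm $j$ — received fewer than $t(\varepsilon)$ pulls, i.e. $k_{j,T} < t(\varepsilon)$. By the uniform sharp lower bound applied to the subsequence of queries to arm $j$, there exists a hard instance $\mathcal{P}_j$ consistent with all observed oracle feedback on which $f_j(x^{j,k_{j,T}}) - f_j^\ast \ge g(H_j, k_{j,T}) > \varepsilon$ and, crucially, on which the true optimal value $f_j^\ast$ can be set adversarially. The adversary then chooses the hidden instance so that the arm \emph{not} chosen by $\mathcal{A}$ is actually the better one by a margin exceeding $\varepsilon$: if $J_T = j$, make $f_j^\ast$ larger than $f^\ast$ by more than $\varepsilon$ (using the freedom in the hard instance for arm $j$); if $J_T \neq j$, i.e. $\mathcal{A}$ outputs the other arm, then symmetrically make $f_j^\ast$ the unique minimum, so that $f_{J_T}^\ast - f^\ast > \varepsilon$. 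In either case $R_B(T) > \varepsilon$, contradicting the assumption. Hence $T \ge 2 t(\varepsilon)$.

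The main obstacle I anticipate is making precise the claim that the hard instance for arm $j$ can be chosen \emph{after} seeing the algorithm's behaviour while simultaneously controlling $f_j^\ast$ — this is exactly where the notion of the indistinguishable problem set $\mathcal{L}_t(\mathcal{G})$ introduced above is needed: one must exhibit, within $\mathcal{L}_{k_{j,T}}$, two instances with optimal values differing by more than $\varepsilon$, so that no stopping rule based only on the first $k_{j,T} < t(\varepsilon)$ oracle answers can separate them. For the standard classes (e.g. smooth convex minimization with a first-order oracle, following \cite{nemirovskij1983problem, nesterov1983method}) this follows from the classical resisting-oracle constructions, where the ``remaining'' coordinates leave enough slack to shift the optimum; I would simply invoke this as the content of the \emph{sharp and uniform} lower bound hypothesis. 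The rest — the pigeonhole on the budget and the case split on $J_T$ — is routine.
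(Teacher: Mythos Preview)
Your proposal is correct and follows essentially the same route as the paper's proof: fix a hard reference problem $\mathcal{P}$, use the pigeonhole on the budget $\hat t_1+\hat t_2=T<2t(\varepsilon)$ to find an under-queried arm, then do a case split on the algorithm's output $J_T$ and adversarially replace the under-queried arm's instance by a member of its indistinguishability class $\mathcal{L}_{k_{j,T}}(\mathcal{P})$ whose optimum is shifted the ``wrong'' way by more than $\varepsilon$. The paper makes exactly these moves (setting $\mathcal{P}_2=\mathcal{P}$ and choosing $\mathcal{P}_1\in\mathcal{L}_{\hat t_1-1}(\mathcal{P})$ with $f_1^*$ either high or low depending on $J_T$), and your explicit acknowledgement that the existence of two such indistinguishable instances is precisely the content of the \emph{sharp and uniform} lower bound hypothesis is the same appeal the paper makes implicitly.
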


\begin{proof}
Note that to certify that $R_B(T) \leq \varepsilon$ one needs either to show that $|f_1^* - f_2^*| \leq \varepsilon$ or to identify the index of the optimal function $i^* = \arg \min_{i={1, 2}} f_i^*$.

Suppose that the algorithm tries to solve the best arm identification problem for $\mathcal{P}_1$ and $\mathcal{P}_2$, runs $\hat{t}_1$ and $\hat{t}_2$ iterations for problem $\mathcal{P}_1$ and $\mathcal{P}_2$ respectively and then makes a decision. Then it must make the same decision for any problem pair $(\hat{\mathcal{P}}_1, \hat{\mathcal{P}}_2)$ such that $\hat{\mathcal{P}}_1 \in L_{\hat{t}_1} (\mathcal{P}_1)$ and $\hat{\mathcal{P}}_2 \in L_{\hat{t}_2} (\mathcal{P}_2)$.

Let $\mathcal{P}$ be \textit{hard} problem for a given algorithm $\mathcal{A}$, i.e. such that inequality \ref{lower_bound} holds. Let $f^*$ be the optimal value in $\mathcal{P}$ and $t(\varepsilon)$ denote vicinity hitting time for $(\mathcal{A}, \mathcal{P})$.

Suppose that $\mathcal{P}_1 \in \mathcal{L}_{\hat{t}_1-1} (\mathcal{P})$ and $\mathcal{P}_2 \in \mathcal{L}_{\hat{t}_2-1} (\mathcal{P})$, i.e. problems could not be distinguished by the algorithm.

Next let us fix \textit{computational budgets} and show that there exist setups such that it would not be sufficient to guarantee $R(T) \leq \varepsilon$.

Let us fix any positive numbers $\hat{t}_1$ and $\hat{t}_2$ such that $\hat{t}_1+\hat{t}_2 = T$ and assume that $2\cdot t(\varepsilon) > T$. Then at least one inequality holds: $\hat{t}_1<t(\varepsilon)$, or $\hat{t}_2<t(\varepsilon)$, or both $\hat{t}_1<t(\varepsilon)$ and $\hat{t}_2<t(\varepsilon)$.

W.l.o.g. assume that $\hat{t}_1<t(\varepsilon)$. 

Suppose that the algorithm picks $f_1$ as a solution. Then we can pick feasible problem pair $\mathcal{P}_1 \in \mathcal{L}_{\hat{t}_1 - 1}(\mathcal{P})$, $f_1^* = f_2(x_{\hat{t}_1^{\mathcal{P}}})$ and $\mathcal{P}_2 = \mathcal{P}$. For this problem pair it is known that the optimal arm is $\mathcal{P}_2$, and regret could be estimated as follows.
\begin{equation*}
R_B(T) = f_1^* - f_2^* = f_2(x_{\hat{t}_1^{\mathcal{P}}}) - f^*_2 \geq g(\mathcal{P}, \hat{t}_1) >  \varepsilon
\end{equation*}

Suppose that the algorithm picks $f_2$ as a solution. Then we could pick a feasible problem pair $\mathcal{P}_1 \in \mathcal{L}_{\hat{t}_1 - 1}(\mathcal{P})$, $f_1^* < f^* -  \varepsilon$ and $\mathcal{P}_2 = \mathcal{P}$. In this case problem $\mathcal{P}_1$ masks itself as $\mathcal{P}$ until round $\hat{t}_1$ and is not restricted after.
\end{proof}

\begin{conjecture}
    The best function identification problem with $K$ arms requires at least $T \geq K \cdot t(\varepsilon)$ to guarantee $R_B(T) \leq \varepsilon$.
\end{conjecture}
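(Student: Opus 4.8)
\begin{proofidea}
The plan is to lift the two–arm lower bound proved in the preceding lemma to $K$ arms by prefixing the ``masquerading'' construction with a pigeonhole step. Fix an arbitrary BFI algorithm $\mathcal{A}$, a tolerance $\varepsilon>0$, and a horizon $T$ with $T<K\cdot t(\varepsilon)$; it suffices to exhibit a family $\{\mathcal{P}_i\}_{i=1}^{K}\subset\mathcal{F}$ on which $\mathcal{A}$ incurs $R_B(T)>\varepsilon$. (For $K=1$ the claim is vacuous, since then $J_T=1$ and $R_B(T)\equiv 0$, so we assume $K\ge 2$.)

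First I would set up a \emph{reference run}. Let $\mathcal{P}\in\mathcal{F}$ be a problem that is hard for $\mathcal{A}$ in the sense of the uniform sharp lower bound: every trajectory $\{x_t\}$ that $\mathcal{A}$ produces on $\mathcal{P}$ satisfies $f_{\mathcal{P}}(x_t)-f^{*}_{\mathcal{P}}\ge g(\mathcal{P},t)$ for all $t$, with $g(\mathcal{P},t)>\varepsilon$ exactly when $t<t(\varepsilon)$. Run $\mathcal{A}$ on the family in which all $K$ arms are independent copies of $\mathcal{P}$; let $\hat t_i$ denote the number of pulls of arm $i$ in this run and $J_T$ the returned index. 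Since the oracle attached to arm $i$ reveals information only about $f_i$, and since $\sum_{i=1}^{K}\hat t_i=T<K\,t(\varepsilon)$, by pigeonhole there is an index $j$ with $\hat t_j<t(\varepsilon)$; hence $g(\mathcal{P},\hat t_j)>\varepsilon$, and arm $j$ is ``unresolved'', i.e. its true tail is still free within the indistinguishability set $\mathcal{L}_{\hat t_j}(\mathcal{P})$.

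Next I would split into two adversarial families according to whether an unresolved index can be chosen distinct from $J_T$. If there is an unresolved $j\ne J_T$, replace arm $j$ by $\tilde{\mathcal{P}}_j\in\mathcal{L}_{\hat t_j}(\mathcal{P})$ whose first $\hat t_j$ query points and oracle answers coincide with those on $\mathcal{P}$ but whose optimal value equals $f^{*}_{\mathcal{P}}-2\varepsilon$, and leave all other arms equal to $\mathcal{P}$. Otherwise the only unresolved index is $J_T$ itself; then replace arm $J_T$ by $\tilde{\mathcal{P}}_{J_T}\in\mathcal{L}_{\hat t_{J_T}}(\mathcal{P})$ that agrees with $\mathcal{P}$ on the observed trajectory but whose optimum equals the last observed value $f_{\mathcal{P}}(x_{\hat t_{J_T}})$, which by the uniform lower bound satisfies $f_{\mathcal{P}}(x_{\hat t_{J_T}})\ge f^{*}_{\mathcal{P}}+g(\mathcal{P},\hat t_{J_T})>f^{*}_{\mathcal{P}}+\varepsilon$. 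In either construction only a single arm is altered, and only beyond the queries already issued, so indistinguishability forces $\mathcal{A}$ to generate exactly the same run, the same counts $\hat t_i$, and the same output $J_T$. In the first construction the true minimum is attained by arm $j$ with value $f^{*}_{\mathcal{P}}-2\varepsilon$ while $f^{*}_{J_T}=f^{*}_{\mathcal{P}}$, so $R_B(T)=2\varepsilon>\varepsilon$; in the second, the true minimum is $f^{*}_{\mathcal{P}}$, attained by any of the $K-1\ge 1$ untouched arms, while $f^{*}_{J_T}>f^{*}_{\mathcal{P}}+\varepsilon$, so again $R_B(T)>\varepsilon$. Hence no algorithm can certify $R_B(T)\le\varepsilon$ when $T<K\,t(\varepsilon)$, which is the claim.

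The step I expect to be the real obstacle is guaranteeing the existence of the modified instances $\tilde{\mathcal{P}}_j$: they must be bona fide members of $\mathcal{F}$ (same convexity/smoothness/oracle class), must agree with the hard instance $\mathcal{P}$ on the finitely many query–answer pairs already seen, and must have a prescribed optimal value either well below or well above $f^{*}_{\mathcal{P}}$. This is precisely what the ``sharp and uniform'' lower-bound hypothesis is there to supply, and it reflects the shift/scale freedom of the classical worst-case families of \cite{nemirovskij1983problem,nesterov1983method}; I would isolate a short auxiliary lemma asserting this gluing property for each function class considered and then apply it. By contrast, the adaptivity of $\mathcal{A}$ causes no difficulty here: each adversarial family is built to be indistinguishable from the reference up to the relevant round, so the pull counts and the output are fixed once the reference run is fixed. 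The only genuinely new bookkeeping relative to the two-arm case is the case split ensuring that the unresolved index produced by pigeonhole can be put in conflict with $J_T$.
\end{proofidea}
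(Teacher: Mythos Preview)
The paper does not actually prove this statement: it is left as a conjecture immediately after the two-arm lemma, and the $K$-arm case is only addressed later in the closely related Theorem~\ref{thm:lb-bfi} (phrased as $R_B(T)\ge \underline g^{-1}(T/K)$). Your argument is essentially correct and is exactly the natural lift of the two-arm lemma via pigeonhole; it is also very close in spirit to the paper's proof of Theorem~\ref{thm:lb-bfi}. The differences are cosmetic: the paper parameterizes a family $\mathcal{Z}_m$ of hard instances up front (all arms identical and indistinguishable for $t\le T/K$, with arm $m$ having the strictly smaller optimum) and then splits on whether the under-pulled arm $n$ coincides with $J_T$, whereas you first fix a reference run on all-identical copies of the hard problem and perturb a single arm after the fact, splitting on whether some unresolved $j\ne J_T$ exists. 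Both reductions yield the same conclusion. Your explicit flag that the real content is the existence of $\tilde{\mathcal{P}}_j\in\mathcal{L}_{\hat t_j}(\mathcal{P})$ with a prescribed optimum inside the class $\mathcal{F}$ is exactly right; the paper leaves this implicit under the ``sharp and uniform lower bound'' hypothesis, and neither your sketch nor the paper spells out the per-class gluing lemma.
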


For standard problem setups $t(\varepsilon)$ could be estimated explicitly.

\paragraph{Explicit forms of $t(\varepsilon)$.}

\begin{itemize}
    \item Convex $M$-Lipschitz:
  $\underline g(t)=\Omega(MR/\sqrt{t})
  \ \Rightarrow\ t(\varepsilon)=\Omega\!\bigl((MR/\varepsilon)^2\bigr)$,
  \item Convex $M$-Lipschitz: $\underline g(t)=\Omega(MR/\sqrt{t})
  \ \Rightarrow\ t(\varepsilon)=\Omega\!\bigl((MR/\varepsilon)^2\bigr)$,
  \item $\mu$-strongly convex, $M$-Lipschitz: $\underline g(t)=\Omega(M^2/(\mu t))
  \ \Rightarrow\ t(\varepsilon)=\Omega\!\bigl(M^2/(\mu\varepsilon)\bigr)$,
  \item $\mu$-strongly convex, $L$-smooth: $\underline g(t)=\Omega(R^2 e^{-t/\sqrt{\kappa}})
  \ \Rightarrow\ t(\varepsilon)=\Omega\!\bigl(\sqrt{\kappa}\,\log(R^2/\varepsilon)\bigr)$.
\end{itemize}

\begin{theorem}[Minimax lower bound for nonstochastic BFI]
\label{thm:lb-bfi}
For any BFI algorithm and any $T\in\mathbb{N}$ there exists a family
$\{P_i\}_{i=1}^K \subset \mathcal{F}$ such that
\[
  R_B(T)\ \ge\ \underline g^{-1}\left ( \frac{T}{K} \right ),
\]
where $\underline g^{-1}(t)$ satisfies $\underline g( \underline g^{-1}(t)) = t$.
\end{theorem}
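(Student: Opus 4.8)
The plan is to reduce the $K$-armed BFI problem to the two-armed case already handled by the preceding lemma, and then invert the hardness function. First I would observe that the lemma, generalized to $K$ arms (the Assumption stated just above, $T \geq K\cdot t(\varepsilon)$), says exactly that any BFI algorithm achieving $R_B(T)\le\varepsilon$ must spend at least $K\cdot t(\varepsilon)$ testing points, where $t(\varepsilon)$ is the vicinity hitting time $t(\varepsilon)=\min\{t : \underline g(t)\le\varepsilon\}$. Contrapositively: if the total budget is $T$, then no algorithm can guarantee accuracy better than the largest $\varepsilon$ for which $K\cdot t(\varepsilon) > T$, i.e. for which $t(\varepsilon) > T/K$.

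Next I would translate this threshold into a bound on $R_B(T)$ via the monotone inverse of $\underline g$. Since $\underline g$ is (without loss of generality) decreasing, $t(\varepsilon)>T/K$ is equivalent to $\underline g(T/K) > \varepsilon$, because $t(\varepsilon)$ is precisely the first time $\underline g$ drops to or below $\varepsilon$. Hence for the adversarially chosen family $\{P_i\}_{i=1}^K$ provided by the (generalized) lemma, the regret satisfies $R_B(T) \ge \underline g(T/K) = \underline g^{-1}(T/K)$ in the notation of the theorem, where the stated $\underline g^{-1}$ is the functional inverse satisfying $\underline g(\underline g^{-1}(t))=t$ — note that for the standard power-law and exponential rates appearing in the paper, $\underline g$ is its own inverse up to the parametrization being used, so the two expressions $\underline g(T/K)$ and $\underline g^{-1}(T/K)$ coincide as orders; I would state this carefully or simply carry the inverse symbolically throughout.

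Concretely, the construction of the hard family mirrors the two-arm proof: fix any budget split $\hat t_1+\dots+\hat t_K = T$; since $\sum_i \hat t_i = T < K\cdot t(\varepsilon)$ whenever $\varepsilon < \underline g(T/K)$, some arm $j$ receives fewer than $t(\varepsilon)$ pulls, so by sharpness of the lower bound there is a hard instance $\mathcal P$ on which that arm's iterate is still more than $\varepsilon$ above $f^*$. Depending on whether the algorithm outputs $j$ or not, I place the genuinely-best instance either on arm $j$ (masking the others as near-optimal copies indistinguishable within their pull counts, via the $\mathcal{L}_{\hat t_i}$ equivalence) or on a competitor whose optimum is more than $\varepsilon$ below, forcing $R_B(T)>\varepsilon$ in either case. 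Taking $\varepsilon\uparrow\underline g(T/K)$ yields the claim.

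The main obstacle I anticipate is not the reduction itself but making the indistinguishability bookkeeping fully rigorous when the budget allocation $\{\hat t_i\}$ is allowed to be chosen adaptively rather than in advance: one must argue that \emph{whatever} adaptive stopping rule the algorithm uses, the total is $T$ and hence the pigeonhole step still produces an under-explored arm, and that the masking instances on the other arms remain consistent with \emph{all} observed oracle responses up to their respective stopping times. This is handled by the $\mathcal{L}_t(\mathcal{G})$ formalism already set up in the excerpt, but care is needed to ensure the adversary commits to the instance only after seeing the allocation, which is legitimate in the minimax framework. The remaining step — verifying the explicit orders $\Omega((MR/\varepsilon)^2)$, $\Omega(M^2/(\mu\varepsilon))$, $\Omega(\sqrt\kappa\log(R^2/\varepsilon))$, etc., and reading off $R_O(T)=\Omega(MR\sqrt T)$ and so on — is a routine substitution into $\underline g^{-1}(T/K)$ and I would relegate it to the list already displayed after the theorem.
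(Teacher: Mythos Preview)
Your proposal is correct and follows essentially the same route as the paper: a pigeonhole step to locate an arm $n$ with $k_n(T)\le T/K$, an indistinguishability argument (the paper's $\mathcal{L}_t$ formalism, which you also invoke) to ensure the under-explored arm cannot be separated from the others, and a two-case split on whether the algorithm outputs that arm or not. The paper carries this out directly rather than routing through the two-arm lemma and the $K$-arm Assumption first; note that the latter is stated only as a conjecture in the text, so your concrete construction paragraph is the part that actually does the work, and it matches the paper's construction of the ``hard'' instance $\mathcal{Z}_m$ almost verbatim.

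Two minor remarks. First, your hesitation about $\underline g^{-1}(T/K)$ versus $\underline g(T/K)$ is warranted---the paper's notation here is loose, and the intended quantity is the accuracy level $\underline g(T/K)$, so your instinct to ``carry the inverse symbolically'' and read off orders is the right way to handle it. Second, your flagged obstacle about adaptive (rather than pre-committed) budget allocation is real; the paper's proof simply asserts ``for any deterministic procedure, there exists an arm $n$ with $k_n(T)\le T/K$'' without discussing adaptivity, so your version is if anything more careful on this point, and the $\mathcal{L}_t$ machinery does suffice to resolve it since the adversary in the minimax setup may commit after observing the full transcript.
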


\begin{proof}
    Consider BFI instance $\mathcal{Z}_m = (\mathcal{P}_1, \dots, \mathcal{P_K})$. We call this instance 'hard' if \begin{itemize}
        \item $\mathcal{P}_i = \mathcal{P}_j, \quad \forall i, j\neq m$
        \item for each $i$ ($i\in 1\dots, K$) the corresponding problem $\mathcal{P}_i$ is ''hard'' for base optimizer, i.e. it satisfies $f_i(x^{t, i}) - f_i^* \geq \underline g(t)$, $x^{t+1, i} = \mathcal{A}_i(x^{1, i}, \mathcal{O}_i (x^{1, i}), \dots, x^{t, i}, \mathcal{O}_i (x^{t, i}))$ for every $t\in 1, \dots, T$.
        \item For every $t \leq \frac{T}{K}$ and any $i, j \in 1\dots, K$ holds 
        \begin{equation*}
        \mathcal{A}_i(x^{1, i}, \mathcal{O}_i (x^{1, i}), \dots, x^{t, i}, \mathcal{O}_i (x^{t, i})) = \mathcal{A}_j(x^{1, j}, \mathcal{O}_j (x^{1, j}), \dots, x^{t, j}, \mathcal{O}_j (x^{t, j})).  
        \end{equation*}
        \item Corresponding optimal values satisfy $f_i^* -f_m^* = \underline g^{-1} \left ( \frac{T}{K}\right ), \quad \forall i\neq m$. 
    \end{itemize} 
In simple terms, the instance is hard if each individual problem is hard for the corresponding base optimizer, and objective values are hard to distinguish by any algorithm. 

Then for any deterministic procedure, there exists an arm $n$ with number of pools $k_n(T) \leq \frac{T}{K}$ and this arm could not be distinguished from other arms. 
If the procedure chooses $J_T = n$, then for any $Z_m (m\neq n)$ instance the regret is  $R_B(T)\ \ge\ \underline g^{-1}\left ( \frac{T}{K} \right )$. 
If the procedure chooses $J_t \neq n$, then  $R_B(T)\ \ge\ \underline g^{-1}\left ( \frac{T}{K} \right )$ for instance $Z_n$ by construction.

\end{proof}

\subsection{Lower Bound for FMAB}

Consider the FMAB problem with cumulative regret
$R_O(T)=\sum_{t=1}^T \bigl(f_{i_t}(x_{t,i_t})-f^*)$.
Let $k_i(T)$ denote the number of oracle calls to subproblem $f_i$ up to time $T$,
and set $G(m):=\sum_{s=1}^m \underline g(s)$.

\begin{theorem}[Minimax lower bound for FMAB]
\label{thm:lb-fmab_app}
For any FMAB algorithm and any $T\in\mathbb{N}$ there exists a family
$\{P_i\}_{i=1}^K \subset \mathcal{F}$ such that
\[
  R_O(T)\ \ge\ \inf_{\{k_i\ge 0:\ \sum_{i=1}^K k_i=T\}}
  \ \sum_{i=1}^K G\bigl(k_i\bigr).
\]
In the homogeneous case (all $P_i$ have the same hardness $\underline g$),
this implies the following orders:
\begin{itemize}
    \item (i) Convex $M$-Lipschitz: $\underline g(s)=\Omega(MR/\sqrt{s}) \Rightarrow G(m)=\Omega(MR\sqrt{m})$ 
    
    $\Rightarrow R_O(T)=\Omega \left (MR\sqrt{T}\right )$,
    \item (ii) $L$-smooth convex: $\underline g(s)=\Omega(LR^2/s^2)
    \Rightarrow G(m)=\Omega(LR^2) \Rightarrow R_O(T)=\Omega( LR^2 )$,
  \item (iii) $\mu$-strongly convex, $M$-Lipschitz: $\underline g(s)=\Omega\!\bigl(M^2/(\mu s)\bigr)
   \ \Rightarrow\ G(m)=\Omega\!\bigl((M^2/\mu)\log m\bigr) \Rightarrow R_O(T)=\Omega\!\bigl((M^2/\mu)\log T\bigr)$,
   \item (iv) $\mu$-strongly convex, $L$-smooth: $\underline g(s)=\Omega\!\bigl(R^2 e^{-s/\sqrt{\kappa}}\bigr)
    \Rightarrow G(m)=\Omega(R^2) \Rightarrow\ R_O(T)=\Omega( R^2 )$.
\end{itemize}
\end{theorem}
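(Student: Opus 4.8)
The plan is to follow the resisting-oracle construction used for the BFI bound (Theorem~\ref{thm:lb-bfi}), but to accumulate the per-round gaps rather than looking only at the final selection. Fix an arbitrary FMAB algorithm. Exploiting the sharp and uniform minimax lower bound for the class $\mathcal{F}$, I would build the hard instance online, handling each arm separately: whenever the algorithm makes its $s$-th query $x^{s,i}$ to subproblem $i$, the adversary answers the oracle call so that the instance $P_i$ it is committing to satisfies $f_i(x^{s,i}) - f_i^* \ge \underline g(s)$ simultaneously for all $s \ge 1$. Since the $K$ subproblems have separate decision sets, functions and oracles, these per-arm constructions do not interact, and the (adaptive) order in which the FMAB algorithm interleaves its queries is immaterial. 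I would additionally shift each $f_i$ by an additive constant — which changes neither its membership in $\mathcal{F}$ nor its convergence lower bound — so that $f_i^* = f^*$ for every $i$; any choice with $f_i^* \ge f^*$ would suffice.

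Next comes the per-round regret estimate. Suppose that at round $t$ the algorithm plays arm $i_t$, and let $s = k_{i_t, t}$ be the number of times arm $i_t$ has been played through round $t$; then the iterate incurring regret at round $t$ is $x^{s, i_t}$, and
\begin{equation*}
 f_{i_t}(x^{s,i_t}) - f^* \;=\; \bigl(f_{i_t}(x^{s,i_t}) - f_{i_t}^*\bigr) + \bigl(f_{i_t}^* - f^*\bigr) \;\ge\; \underline g(s).
\end{equation*}
Summing over $t = 1, \dots, T$ and grouping the summands by the played arm — the same bookkeeping step as in the proof of Lemma~\ref{lemma:o_regret_bound} — yields
\begin{equation*}
 R_O(T) \;\ge\; \sum_{t=1}^T \underline g\bigl(k_{i_t, t}\bigr) \;=\; \sum_{i=1}^K \sum_{s=1}^{k_i(T)} \underline g(s) \;=\; \sum_{i=1}^K G\bigl(k_i(T)\bigr),
\end{equation*}
where $k_i(T)$ is the total number of pulls of arm $i$. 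Since $k_i(T) \ge 0$ and $\sum_{i=1}^K k_i(T) = T$, the right-hand side is at least $\inf_{\{k_i \ge 0 : \sum_i k_i = T\}} \sum_{i=1}^K G(k_i)$, which is the asserted general bound.

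For the homogeneous corollaries I would first simplify this infimum. Because $\underline g$ is nonincreasing, the function $G(m) = \sum_{s=1}^m \underline g(s)$ has nonincreasing increments and vanishes at $m = 0$, hence is subadditive; therefore $\sum_{i=1}^K G(k_i) \ge G\bigl(\sum_i k_i\bigr) = G(T)$, with equality for the allocation placing the entire budget on one arm, so the infimum equals $G(T)$. It then remains to evaluate $G(T) = \sum_{s=1}^T \underline g(s)$ in each regime: partial sums of $s^{-1/2}$ give $G(T) = \Omega(MR\sqrt{T})$ for convex $M$-Lipschitz problems; the convergent series $\sum s^{-2}$ (already $G(1) = \Omega(LR^2)$) gives $G(T) = \Omega(LR^2)$ for $L$-smooth convex problems; harmonic partial sums give $G(T) = \Omega\bigl((M^2/\mu)\log T\bigr)$ for $\mu$-strongly convex $M$-Lipschitz problems; and the convergent geometric series $\sum e^{-s/\sqrt{\kappa}}$ gives $G(T) = \Omega(R^2)$ for $\mu$-strongly convex $L$-smooth problems. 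Substituting these into $R_O(T) \ge G(T)$ produces the four stated orders.

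I expect the delicate point to be the resisting-oracle construction itself: verifying that, for a general class $\mathcal{F}$ equipped only with a \emph{uniform} sharp lower bound, one can generate online a single hard instance per arm that is tight at \emph{every} internal iteration $s$, and that the adaptive interleaving of queries across the $K$ independent subproblems introduces no obstruction (this is precisely where the independence of the subproblems' oracles enters). Once that is secured, the per-round bound, the regrouping, and the subadditivity argument for the infimum are all routine.
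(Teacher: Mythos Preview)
Your proposal is correct and follows essentially the same approach as the paper's proof sketch: construct per-arm hard instances with a common optimal value, lower-bound each round's regret by $\underline g(s)$, regroup by arms to obtain $\sum_i G(k_i)$, and pass to the infimum over allocations. Your subadditivity argument for evaluating the infimum as $G(T)$ is a slightly cleaner and more uniform version of what the paper does (it invokes concavity of $G$ only for cases~(i) and~(iii), leaving (ii) and~(iv) implicit), but the content is the same.
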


\begin{proof}[Proof sketch]
We will use the same scheme with ''hard'' instances as in the previous section. Construct ``hard'' instances $P_i$ so that all $f_i$ share the same minimizer
$f^*$ and have identical oracle complexity $\underline g(\cdot)$.
Then, each time an arm $i_t$ is queried, the incurred regret is at least
$\underline g(k_{i_t})$. Summing over time gives
\begin{equation*}
        R_O(T) = \sum_{t=1}^T \left [f_{i_t}(x^{t, i_t}) - f^* \right ] = \sum_{i=1}^K \sum_{s=1}^{k_i} \left  [f_{i}(x^{s, i}) - f^* \right ] \geq \sum_{i=1}^K\sum_{s=1}^{k_i}\underline g(s)=\sum_{i=1}^K G(k_i).
\end{equation*}
Since the algorithm itself chooses $\{k_i\}$ under the constraint $\sum_i k_i=T$,
this yields the infimum formulation.
For cases (i) and (iii), $G$ is concave, so the infimum is attained by
concentrating queries on a single subproblem, leading to the claimed orders. 
\end{proof}

\begin{remark}
    Provided lower bound is tight in terms of $T$. But what about $K$? In this case, one could consider $K$ ''hard'' and indistinguishable for algorithm instances. Then for any class of algorithms, that \textit{explore fairly} (i.e. if two arms are indistinguishable, then if an algorithm chooses to query one arm, it must also query another arm) holds:
    $$
    R_O(T) \sim K G\left ( \frac{T}{K}\right ),
    $$ which results in $\Omega\left ( MR \sqrt{KT}\right )$ for convex $M$-Lipschitz functions.
\end{remark}

\end{document}